    \DeclareRobustCommand*{\escapeus}[1]{%
    \begingroup\@activeus\scantokens{#1 }\endgroup}
\newcommand{\myemph}[1]{\textsf{{\escapeus{#1}}}}
\newtheorem{definition}{Definition}
\newtheorem{assumption}{Assumption}
\newtheorem{estimator}{Estimator}
\DeclareSIUnit\thousand{k}
\DeclareSIUnit\million{M}
\DeclareSIUnit\billion{B}
\DeclareSIUnit\trillion{T}
\DeclareSIUnit\x{x}
\DeclareSIUnit\percent{\%}
\DeclareSIUnit\hour{h}
\DeclareSIUnit\min{m}
\DeclareSIUnit\sec{s}
\DeclareSIUnit\gb{GB}
\newcommand{\integer}[1]{\num[mode = math, round-mode=places, round-precision=0, group-separator={,}, group-minimum-digits=4]{#1}\xspace}
\newcommand{\float}[2][1]{\num[group-digits=false, round-precision=#1, round-mode=places]{#2}\xspace}
\newcommand{\snum}[1]{\num{#1}\xspace}
\newcommand{\q}[2]{\qty[mode=math]{#1}{#2}\xspace}
\newcommand\cincludegraphics[2][]{\raisebox{-0.28\height}{\includegraphics[#1]{#2}}}
\crefname{section}{\S}{\S\S}
\Crefname{section}{\S}{\S\S}
\crefname{appendix}{App.}{}
\crefname{figure}{Fig.}{Fig.}
\crefname{table}{Table}{Tables}
\crefname{equation}{eq.}{eqs.}
\crefname{theorem}{Thm.}{Thm.}
\crefname{proposition}{Proposition}{Propositions}
\crefname{assumption}{Assump.}{Assumps.}
\crefname{definition}{Definition}{Definitions}
\crefname{estimator}{Estimator}{Estimators}
 \def\SOUL@hlpreamble{%
 \setul{0pt}{2.6ex}%
 \let\SOUL@stcolor\SOUL@hlcolor
 \SOUL@stpreamble
 }
\definecolor{Gray}{gray}{0.9}
\definecolor{black}{gray}{0}
\newcommand{\defn}[1]{\textbf{#1}}
\newcommand{\vs}{\emph{vs.}\@\xspace}
\DeclareMathOperator*{\argmin}{argmin}
\DeclareMathOperator*{\argmax}{argmax}
\DeclareMathOperator*{\expect}{\mathbb{E}}
\DeclareMathOperator*{\generalaggfunction}{\mathbb{M}}
\newcommand{\defeq}{\mathrel{\stackrel{\textnormal{\tiny def}}{=}}}
\newcommand{\smallcdots}{\cdots}
\newcommand{\smalldots}{...}
\newcommand{\size}[1]{{\left\vert #1 \right\vert}}
\newcommand{\one}{\mathbbm{1}}
\definecolor{blind_blue}{HTML}{547FEF}
\definecolor{blind_magenta}{HTML}{DC267F}
\definecolor{blind_yellow}{HTML}{FFB000}
\definecolor{blind_orange}{HTML}{FE6100}
\newcommand{\subwordcolor}{purple}
\newcommand{\charcolor}{blind_blue}
\newcommand{\mergecolor}{blind_orange}
\newcommand{\textcharcolor}[1]{\textcolor{\charcolor}{#1}}
\newcommand{\textsubwordcolor}[1]{\textcolor{\subwordcolor}{#1}}
\newcommand{\textmergecolor}[1]{\textcolor{\mergecolor}{#1}}
\newcommand{\bpe}{\myemph{BPE}\xspace}
\newcommand{\wordpiece}{\myemph{WP}\xspace}
\newcommand{\bpewp}{\myemph{BPE2WP}\xspace}
\newcommand{\tok}{\tau}
\newcommand{\detok}{\ensuremath{\rotatebox[origin=c]{180}{$\tok$}}}
\newcommand{\toklongest}{\tok_{\myemph{long}}}
\newcommand{\tokbup}{\tok_{\uparrow}}
\newcommand{\tokeniser}{\mathbbm{T}}
\newcommand{\objectivefunc}{\phi}
\newcommand{\objectivebpe}{\objectivefunc_{\myemph{bpe}}}
\newcommand{\objectivewordpiece}{\objectivefunc_{\myemph{wp}}}
\newcommand{\countfunc}{\#}
\newcommand{\charstring}[2][true]{%
  {\color{\charcolor}\ensuremath{%
    \ifthenelse{\equal{#1}{true}}{\mathit{\text{``}\escapeus{#2}\text{''}}}{\text{``}#2\text{''}}}}%
  \xspace
}
\newcommand{\subwordspan}[2][true]{%
  {\color{\subwordcolor}\ensuremath{%
    \ifthenelse{\equal{#1}{true}}{\mathit{\text{``}\escapeus{#2}\text{''}}}{\text{``}#2\text{''}}}}%
  \xspace
}
\newcommand{\charspan}[2][true]{%
  {\color{\subwordcolor}\ensuremath{%
    \ifthenelse{\equal{#1}{true}}{\mathit{\text{``}\escapeus{#2}\text{''}}}{\text{``}#2\text{''}}}}%
  \xspace
}
\newcommand{\ch}[1][]{{\color{\charcolor}c}\ifx#1\relax \else _{{#1}}\fi}
\newcommand{\chs}[1][]{\mathbf{\color{\charcolor}c}\ifx#1\relax \else _{{#1}}\fi}
\newcommand{\subwordstring}[2][true]{%
  {\color{\subwordcolor}\ensuremath{%
    \ifthenelse{\equal{#1}{true}}{\mathit{\langle \escapeus{#2} \rangle}}{\langle #2 \rangle}}}%
  \xspace
}
\newcommand{\subword}[1][]{%
  {\color{\subwordcolor} v}%
  \ifx#1\relax \else _{{\color{\subwordcolor}{#1}}}\fi
}
\newcommand{\subwords}[1][]{%
  \mathbf{\subword}%
  \ifx#1\relax \else _{{\color{\subwordcolor}{#1}}}\fi
}
\newcommand{\vocab}[1][]{%
  {\color{\subwordcolor} \mathcal{V}}%
  \ifx#1\relax \else _{{\color{\subwordcolor}{#1}}}\fi
}
\newcommand{\vocabsize}{K}
\newcommand{\mergestring}[3][false]{%
  {\color{\mergecolor}\ensuremath{%
    \ifthenelse{\equal{#1}{true}}{\mathit{\escapeus{#2}}\mathop{\circledcirc}\mathit{\escapeus{#3}}}
    {#2\mathop{\circledcirc}#3}}}%
  \xspace
}
\newcommand{\merge}[1][]{%
  {\color{\mergecolor} m}%
  \ifx#1\relax \else _{{\color{\mergecolor}{#1}}}\fi
}
\newcommand{\merges}[1][]{%
  \mathbf{\merge}%
  \ifx#1\relax \else _{{\color{\mergecolor}{#1}}}\fi
}
\newcommand{\allmerges}[1][]{%
  \mathbf{{\color{\mergecolor}\widetilde{\merge}}}%
  \ifx#1\relax \else _{{\color{\mergecolor}{#1}}}\fi
}
\newcommand{\mergefunc}[1]{\mathrm{merge}_{#1}}
\newcommand{\mergepair}[1][]{\subword^{\textcolor{\subwordcolor}{\textnormal{\scalebox{0.75}{[#1]}}}}}
\newcommand{\dataset}{\mathcal{D}}
\newcommand{\alphabet}{{\color{\charcolor}\Sigma}}
\newcommand{\stringset}{{\color{\charcolor}\alphabet^*}}
\newcommand{\stringsetplus}{\color{\charcolor}\alphabet^+}
\newcommand{\subwordsset}{{\color{\subwordcolor}\vocab^*}}
\newcommand{\eos}{\myemph{eos}\xspace}
\newcommand{\vtheta}{\boldsymbol{\theta}}
\newcommand{\ptheta}{p_{\scaleto{\vtheta}{5.5pt}}}
\newcommand{\ptok}{p^{\scaleto{\tokeniser}{5pt}}}
\newcommand{\pthetatok}{\ptheta^{\scaleto{\tokeniser}{5pt}}}
\newcommand{\pthetaprime}{p_{\scaleto{\vtheta'}{5.5pt}}}
\newcommand{\pthetaprimetokprime}{\pthetaprime^{\scaleto{\tokeniser'}{5pt}}}
\newcommand{\chssubword}{\chs[\subword]}
\newcommand{\treatment}{W}
\newcommand{\expectpotentialoutcome}[1][]{Y_{#1}(\subword)}
\newcommand{\otherpotentialoutcome}[1][]{Y^{\generalaggfunction}_{#1}(\subword)}
\newcommand{\singlecontextpotentialoutcome}[1][]{Y^{\chs[<t]}_{#1}(\subword)}
\newcommand{\observedoutcome}{Y_{\!\scaleto{\textnormal{\myemph{obs}}}{4.5pt}}(\subword)}
\newcommand{\causaleffect}{\psi}
\newcommand{\atemerge}{\causaleffect_{\subword}}
\newcommand{\ate}{\causaleffect}
\newcommand{\rdestimand}{\ate_{\textnormal{\myemph{RD}}}}
\newcommand*{\circled}[1]{\tikz[baseline=(char.base)]{\node[shape=circle,draw,inner sep=1pt] (char) {\normalfont{\small #1}};}}
\newcommand{\orderfunc}{\gamma}
\newcommand{\runningvar}{\orderfunc_{\subword}}
\newcommand{\ordermerge}{\orderfunc_{\subword}}
\newcommand{\cutoff}{\vocabsize}
\newcommand{\cutoffdelta}{k}
\newcommand{\funcrel}{f}
\newcommand{\rdestimator}{\widehat{\ate}_{\textnormal{\myemph{RD}}}}
\newcommand{\aterunningvar}{\ate_{\runningvar}}
\newcommand{\late}{\ate_{\runningvar}}
\newcommand{\observedset}{\mathcal{K}}
\newcommand{\residual}{\eta}
\newcommand{\minipile}{\myemph{MiniPile}\xspace}
\newcommand{\fineweb}{\myemph{Fineweb-Edu}\xspace}
\title{Causal Estimation of Tokenisation Bias}
\newcommand{\camid}{{\includegraphics[scale=0.018]{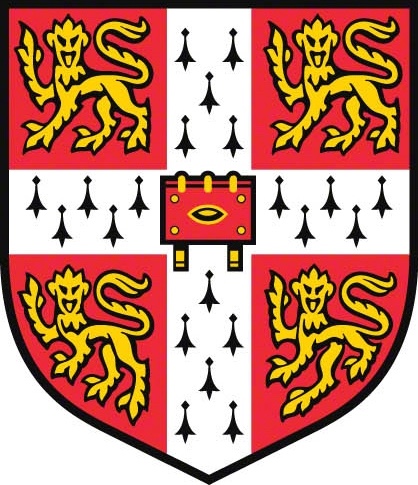}}}
\newcommand{\ethid}{{\includegraphics[scale=0.028]{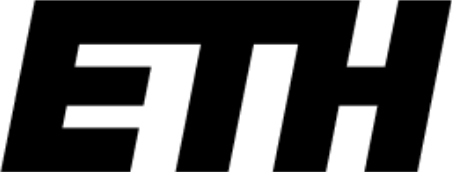}}}
\newcommand{\camemailadress}[1]{\href{mailto:#1@cam.ac.uk}{#1}}
\author{
    Pietro Lesci\,$^\camid_{\text{\textnormal{\Letter}}}$ 
    Clara Meister\textsuperscript{\ethid} 
    Thomas Hofmann\textsuperscript{\ethid} 
    Andreas Vlachos\,$^\camid_{\text{\textnormal{\Letter}}}$
    Tiago Pimentel\,$^\ethid_{\text{\textnormal{\Letter}}}$ \\
    \textsuperscript{\camid}University of Cambridge \quad \textsuperscript{\ethid}ETH Z\"urich \\
    $^{\text{\Letter}}${\myemph{\{\camemailadress{pl487}, \camemailadress{av308}\}\textcolor{darkblue}{@cam.ac.uk}}, \href{mailto:tiago.pimentel@inf.ethz.ch}{\myemph{tiago.pimentel@inf.ethz.ch}}} \\
    \\
    \begin{tblr}{colspec = {Q[c,m] Q[c,m]}, colsep=10pt, stretch=0}
        \cincludegraphics[width=1.1em, keepaspectratio]{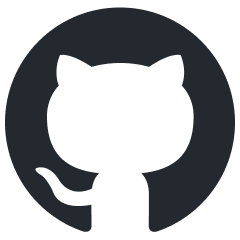} {\fontsize{11pt}{11.5pt}\selectfont\href{https://github.com/pietrolesci/tokenisation-bias}{\myemph{pietrolesci/tokenisation-bias}}}
        & \cincludegraphics[width=1.em, keepaspectratio]{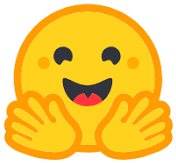} {\fontsize{11pt}{11.5pt}\selectfont\href{https://huggingface.co/collections/pietrolesci/tokenisation-bias-66d5d0b40cb82a2d789b19db}{\myemph{pietrolesci/Tokenisation-Bias}}}
    \end{tblr}
}
\begin{document}

\maketitle

\begin{abstract}
Modern language models are typically trained over subword sequences, but ultimately define probabilities over character-strings. Ideally, the choice of the tokeniser---which maps character-strings to subwords---should not affect the probability assigned to the underlying character-string; in practice, it does.
We define this mismatch as \defn{tokenisation bias}.
In this work, we quantify one particular type of tokenisation bias: the effect of including or not a subword (e.g., \subwordstring{hello}) in a tokeniser’s vocabulary on the probability a trained model assigns to the corresponding characters (i.e., \charstring{hello}).
Estimating this effect is challenging because each model is trained with only one tokeniser.
We address this by framing tokenisation bias as a causal effect and estimating it using the regression discontinuity design. 
Specifically, we exploit the fact that tokenisation algorithms rank subwords and add the first $\vocabsize$ to a tokeniser's vocabulary, where $\vocabsize$ is an arbitrary cutoff point.
As such, we can estimate a causal effect by comparing similar subwords around this cutoff.
Experimentally, we find that tokenisation consistently affects models' outputs across scales, vocabularies, and tokenisers. 
Notably, a subword's presence in a small model's vocabulary may increase its characters' probability by up to \integer{17} times, highlighting tokenisation as a key design choice in language modelling.
\end{abstract}

\section{Introduction}
\label{sec:intro}

Language models (LMs) define probability distributions over \textcharcolor{\defn{character-strings}} (e.g., \charstring{hello}), i.e., finite sequences of characters\footnote{We use characters as a generic term to refer to either raw bytes (e.g., in ASCII), Unicode symbols, or graphemes.} from an alphabet.
Directly modelling character-strings, however, can be inefficient, as it might require processing long sequences.
To improve computational efficiency, modern LMs (e.g., \citealp{touvron2023llama}) typically model \textsubwordcolor{\defn{subwords-strings}} instead (e.g., \subwordstring{he, llo}); these subword-strings are produced by a \defn{tokeniser}, where each subword represents a sequence of characters.
While in practice LMs thus output distributions over subword-strings, we can still map them back to distributions over character-strings \citep{pimentel-meister-2024-compute,phan-etal-2025-exact}.

\begin{figure}[!t]
    \centering
    \includegraphics[width=\columnwidth]{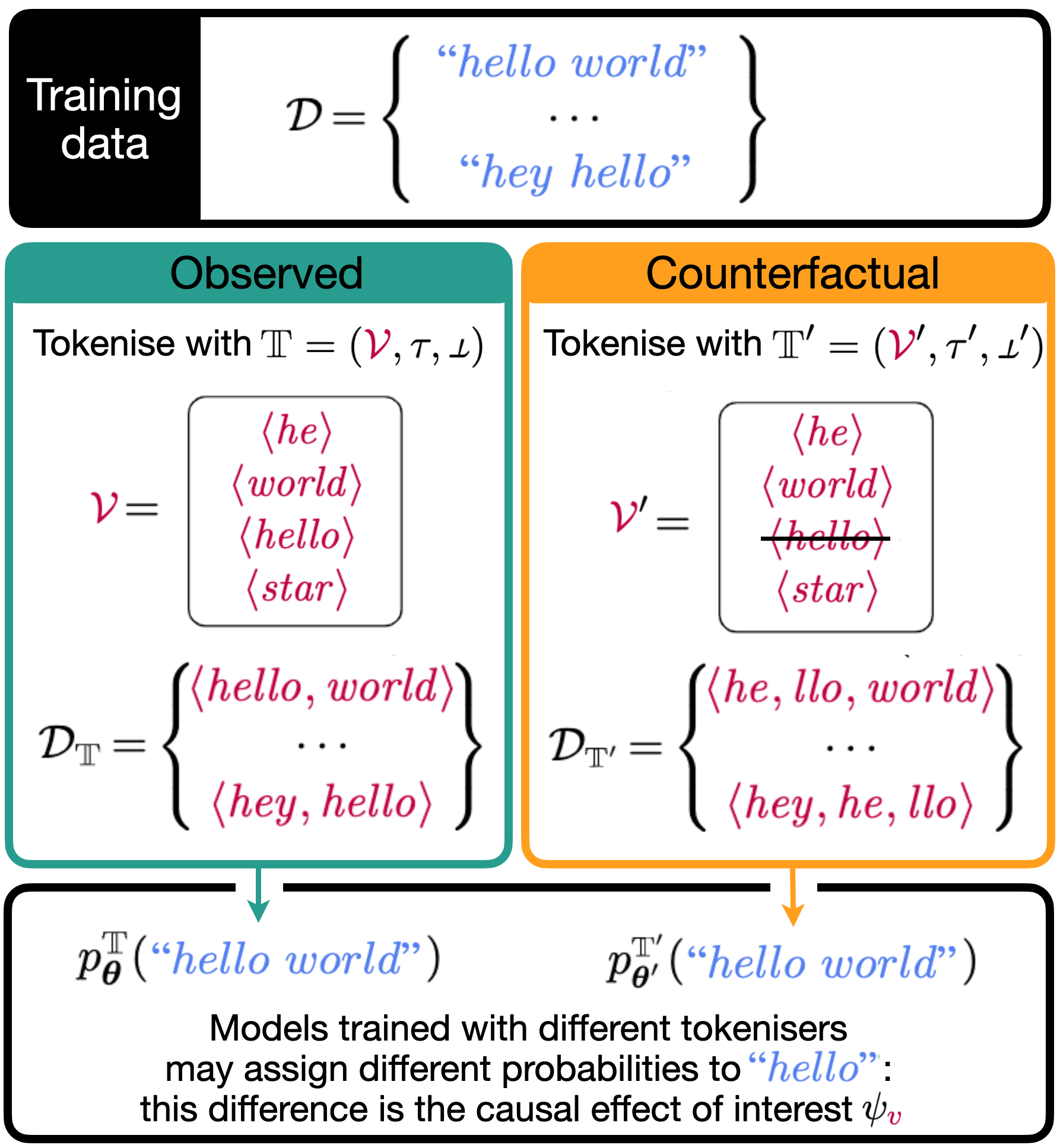}
    \vspace{-18pt}
    \caption{\textbf{Tokenisation bias.} Consider subword $\subwordstring{hello}$. If it is included in the vocabulary, then each occurrence of $\charstring{hello}$ in the training data is represented as a single subword; otherwise, it is split into two subwords, i.e., $\subwordstring{he, llo}$. If two models are trained under these two settings, the difference in the probability they assign to $\charstring{hello}$ is the tokenisation bias we aim to estimate.
    }
    \label{fig:schematic}
    \vspace{-10pt}
\end{figure}

Notably, a tokeniser's behaviour is constrained by its \defn{vocabulary}: a finite set of subwords which it may produce.
Given a character-string, a tokeniser deterministically maps it to a sequence of subwords from its vocabulary.
Different tokenisers (with different vocabularies) thus map
character-strings to different subword-strings (e.g., map \charstring{hello} to either \subwordstring{he, llo} or \subwordstring{hello}). 
While an ideal LM would assign the same probability to a string regardless of tokenisation (\cref{sec:bias_in_perfect}), in practice, tokenisation is an influential design choice for model performance \citep[\textit{inter alia}]{rust-etal-2021-good,Toraman_2023,ali-etal-2024-tokeniser}.
How tokenisation affects trained models, however, remains poorly understood.

In this work, we first define the effect of a tokenisation choice on model behaviour as \defn{tokenisation bias}.
We then quantify one particular type of tokenisation bias: the effect of including or not a subword (e.g., \subwordstring{hello}) in a tokeniser's vocabulary on the log-probability a trained model assigns to its characters (i.e., \charstring{hello}).
While tokenisation bias may be an intuitive concept, estimating it is not straightforward.
We cannot simply compare  a model's log-probabilities on in- \vs out-of-vocabulary words (e.g., \charstring{hello} \vs \charstring{appoggiatura}); as vocabularies are built based on, e.g., frequency, there are systematic differences between these groups.
Further, we cannot compare log-probabilities on the same word under different tokenisations (e.g., \subwordstring{he, llo} or \subwordstring{hello}); as the model only sees one of them during training.\footnote{A brute-force approach---e.g., training separate models under the different possible tokenisers---is (i) computationally impractical and (ii) estimates expected bias for a LM family, rather than the tokenisation bias in a specific model instance.}\looseness=-1

To quantify tokenisation bias, we first frame it as a \defn{causal effect}, asking: how would a word’s log-probability change if its subword was removed from the vocabulary (see \cref{fig:schematic})?
This question requires comparing two values: 
(i) an \defn{observed value}, the log-probability the model assigns the word in question;
(ii) a \defn{counterfactual value}, the log-probability the model \emph{would} assign the word had it not been in the vocabulary.
We estimate this causal effect by noting that tokenisers' vocabularies are typically built incrementally.
This effectively creates a ranking over subwords, which is arbitrarily cut off by a fixed vocabulary size: e.g., the first \q{32}{\thousand} subwords under this ranking are included in the vocabulary; later ones are excluded.
Regression discontinuity (RD) design \citep{thistlewaite1960regression} then gives us a principled way to estimate our causal effect: we can compare subwords near the cutoff---as they are bound to have similar features---using subwords before the cut-off to estimate our observed values, and the ones after the cutoff to input the counterfactual value.

Experimentally, we show that tokenisation consistently affects LM performance across vocabulary sizes, model scales, and tokenisers. Text represented as a single subword receives more log-probability (on average) than if split into two.
Further, this bias grows during training and in models under \q{100}{\million} parameters, it reaches \float[2]{2.88} nats for \bpe and \float[2]{2.51} for \wordpiece.
In \q{850}{\million}-parameter models, this bias is around \integer{1} nat: a subword's characters would thus be assigned roughly \float[1]{2.7} times less probability if it were not in the tokeniser's vocabulary.

\vspace{-3pt}
\section{Tokenisation}
\label{sec:tokenisation}
\vspace{-1pt}

Let $\ch \in \alphabet$ be a \defn{character}, where $\alphabet$ is an alphabet representing either a set of bytes, or the graphemes in a language (including whitespace and punctuation).
Further, let $\chs \in \stringset$ be a \textcharcolor{\defn{character-string}}, i.e., a finite sequences of characters; we will represent these strings as $\chs = \charstring[false]{\ch[1]\ch[2]\smallcdots\ch[\size{\chs}]}$.
The goal of language modelling is to learn a language's probability distribution over character-strings. 
We write these probabilities autoregressively as:
\begin{align}\label{eq:ch_dist}
    p(\chs) = p(\eos \mid \chs)\,\prod_{t=1}^{\size{\chs}} p(\ch[t] \mid \chs[<t])
\end{align}
where $\eos \notin \alphabet$ is a special symbol denoting the end of a string.\footnote{The \eos symbol is required to define probability distributions over strings autoregressively \citep{du-etal-2023-measure}.} 
Most modern LMs, however, do not model \cref{eq:ch_dist} directly.
Rather, they model distributions over \textsubwordcolor{\defn{subword-strings}} $\subwords\!\in\!\subwordsset$:%
\begin{align} \label{eq:subword_dist}
    p(\subwords) = p(\eos \mid \subwords)\,\prod_{t=1}^{\size{\subwords}} p(\subword_t \mid \subwords_{<t})
\end{align}
In words, subword-strings are finite sequences of subwords $\subword \in \vocab $, where $\vocab$ is a finite set typically called a vocabulary;
we thus represent these strings as  $\subwords = \subwordstring[false]{\subword_1, \smalldots, \subword_{\size{\subwords}}}$.
As we explain later, subwords represent character spans, denoted here as $\chs_{\subword} = \subword$.
How do \cref{eq:ch_dist,eq:subword_dist} connect?
This is the job of a tokeniser, which we describe next.

\subsection{Characters-strings \texorpdfstring{$\leftrightarrow$}{and} Subword-strings}
\vspace{-1pt}

\defn{Tokenisers} map character-strings to subword-strings and back, and can be formally defined as a tuple $\tokeniser \defeq (\vocab, \tok, \detok)$. 
The first item in this tuple is a \defn{vocabulary}: a finite set of character-spans $\vocab \subset \stringsetplus$, whose elements are called subwords $\subword \in \vocab$.\footnote{To ensure any character-string can be represented with $\vocab$, the original alphabet is typically included in this set: $\alphabet \subseteq \vocab$.} 
The second item is known as the \defn{tokenisation function}: a function $\tok\colon \stringset \to \subwordsset$ which maps character-strings to subword-strings. 
Finally, the third item in $\tokeniser$ is a \defn{detokenisation function}: a function $\detok\colon \subwordsset \to \stringset$ which maps subword-strings back to character-strings.

As subwords represent character-spans, detokenisation is typically defined as the simple concatenation of the subwords in a subword-string:
\begin{align}
    \detok(\subwords) \defeq \subword_1 \circ \subword_2 \circ \cdots \circ \subword_{\size{\subwords}}
\end{align}
e.g., $\detok(\subwordstring{he,llo}) = \charstring{hello}$. 
Further, to guarantee that a tokeniser is lossless, $\tok$ is designed such that detokenising its output returns the original character-string:  i.e., $\chs = \detok(\tok(\chs))$. 
We describe two popular tokenisation functions in \cref{app:tok_functions}.

\subsection{Language Modelling}

Language modelling is the task of defining a probability distribution over character-strings, arguably being the most prominent use case of tokenisation.
During LM training, we typically start with a dataset of character-strings: $\dataset = \{\chs_{n}\}_{n=1}^{N}$, where $\chs_{n} \sim p(\chs)$.
We then use a tokeniser $\tokeniser$ to convert each of these into a subword-string: $\subwords_n = \tok(\chs_n)$, which are the actual inputs to our model.
Given this setup, we can deduce a distribution over subword-strings from the tokeniser:
\begin{align}\label{eq:subword_ch_connection}
    \ptok\!(\subwords) = 
    \begin{cases}
        p(\chs) & \text{if} \;\; \subwords = \tok(\chs) \\
        0 & \text{otherwise}
    \end{cases}
\end{align}
where we denote this distribution as $\ptok$ now to make its dependence on $\tokeniser$ explicit.
\Cref{eq:subword_ch_connection} thus connects \cref{eq:ch_dist,eq:subword_dist}.
Notably, training an LM on strings $\subwords_n$ produces a model $\pthetatok(\subwords)$---which ideally should be a good approximation of $\ptok\!(\subwords)$.
We can then recover a character-level distribution $\pthetatok(\chs)$ from this model by marginalising over all $\subwords$ which are detokenised to this character-string:
\begin{align}
    \pthetatok(\chs) = 
        \sum_{\subwords \in \subwordsset} \pthetatok(\subwords) \; \one\{\chs = \detok(\subwords)\}
\end{align}
Notably, while this equation depends on an infinite sum over $\subwordsset$, it can be approximated efficiently \citep{phan-etal-2024-understanding,phan-etal-2025-exact,vieira-etal-2024-language}.

\section{Tokenisation Bias}\label{sec:tok_bias}

Now, consider tokeniser $\tokeniser = (\vocab, \tok, \detok)$ and 
assume we use it for training a LM; out data-generating distribution is thus $\ptok$ (from \cref{eq:subword_ch_connection}) and our LM is $\pthetatok$.
Further, consider a specific subword $\subword$ in this tokeniser's vocabulary.
We can measure how well the model predicts this subword's associated characters $\chs_{\subword}$ in a given context $\chs_{<t}$ as how large of a probability it assigns to this sequence of characters $\pthetatok(\chs_{\subword} \mid \chs_{<t})$.
We are interested in studying how this probability would change had $\subword$ not been in $\vocab$.
More generally, we can ask: how does our choice of tokeniser $\tokeniser$ bias our trained model $\pthetatok$?
We term this tokenisation bias.\footnote{We note that \citet{phan-etal-2024-understanding} use the term tokenisation bias to describe an issue arising when one applies $\tok$ to a prefix of $\chs$: namely, when $\tok(\chs_{<t})$ is not necessarily a prefix of $\tok(\chs)$. We interpret this not as a bias, but as a misuse of tokenisers by the LM's users. We will thus term \citeauthor{phan-etal-2024-understanding}'s analysed phenomenon tokenisation mismatch here, instead of bias.}

\begin{definition}\label{def:tokenisation_bias}
    The \defn{tokenisation bias} induced by a \emph{property of the tokeniser} is its effect on a model's ability to predict some character-string $\chs$.
\end{definition}

We can define different \emph{kinds} of tokenisation bias, considering different properties of a tokeniser.
The property we will focus on here is the inclusion of a subword in its vocabulary, i.e., $\subword \!\in\! \vocab$, and how it affects predictions on its characters, $\chs_{\subword}$.

\begin{definition}\label{def:tokenisation_bias_example}
    The \defn{tokenisation bias of \boldsymbol{$\subword\!\in\!\tokeniser$}} is its effect on a model's ability to predict $\chs_{\subword}$.\footnote{We use the shorthand notation $\subword \in \tokeniser$ to denote that $\subword$ is in the tokeniser’s vocabulary $\vocab$.}
\end{definition}

We now frame this tokenisation bias as a \defn{causal effect}: it compares the performance of an \emph{observed} model $\pthetatok$, with the performance of a \emph{counterfactual} model $\pthetaprimetokprime$.
This counterfactual model represents what our model \emph{would} have been if trained with tokeniser \smash{$\tokeniser'= (\vocab', \tok', \detok')$} which did not include subword $\subword$.
We define this causal effect formally in the next section.
First, however, note that: 
\begin{align}
    \underbrace{\subword = \tok(\chssubword),}_{\text{observed subword}} \qquad \underbrace{\subwords^{\prime\chssubword} = \tok'(\chssubword)}_{\text{counterfactual subwords}}
\end{align}
where \smash{$\subwords^{\prime\chssubword}$} is the sequence of subwords which would be used to represent \smash{$\chssubword$} in the counterfactual case.
Assuming no tokenisation mismatch \citep{phan-etal-2024-understanding}, i.e., that  $\tok(\chs_{<t})$ is a prefix of $\tok(\chs)$ in this context (and similarly under \smash{$\tok'$}), we then have:
\begin{align}
    \pthetatok(\chssubword \mid \chs_{<t}) &= \pthetatok(\subword \mid \tok(\chs_{<t})) \\
    \pthetaprimetokprime(\chssubword \mid \chs_{<t}) &= \prod_{t'=1}^{\size{\subwords^{\prime\chssubword}}} \pthetaprimetokprime(\subwords^{\prime\chssubword}_{t'} \mid \tok'(\chs_{<t}) \circ \subwords^{\prime\chssubword}_{<t'}) \nonumber
\end{align}
In words, we can compute $\pthetaprimetokprime(\chssubword \mid \chs_{<t})$ by multiplying the probabilities of subwords in $\subwords^{\prime\chssubword}$.

\subsection{A Subword's Causal Effect}

We now define tokenisation bias in terms of potential outcomes \citep{rubin-1974-estimating, rubin-2005-causal}; this framework allows us to formally describe the causal effect of a \defn{treatment} on some target \defn{outcome}.
In our case, we want to estimate the causal effect of including a subword $\subword$ in tokeniser $\tokeniser$ on a model $\pthetatok$'s ability to predict the character-string $\chs_{\subword}$.
Before defining the effect of interest, however, we must formally introduce its treatment and outcome.
We define a \defn{treatment assignment variable} as:
\begin{align}\label{eq:treatment}
    \treatment_{\subword} \defeq \one\{\subword \in \tokeniser\}
\end{align}
and we define a \defn{potential outcome variable} as:
\begin{align}\label{eq:potential_outcome}
    \expectpotentialoutcome[\treatment] \defeq \expect_{\chs[<t]}\left[
    \log \pthetatok(\chssubword \mid \chs[<t])
    \right]
\end{align}
where we quantify a model's ability to predict a character-string $\chssubword$ as its log-probability averaged across contexts $\chs[<t]$.\footnote{
Our framework also generalises beyond mean effects across contexts. For instance, replacing the expectation in \cref{eq:potential_outcome} with a standard deviation yields the tokenisation bias of  $\subword\!\in\!\tokeniser$ on a model's variability when predicting $\chs_{\subword}$.
Similarly, defining $\singlecontextpotentialoutcome[\treatment] = \log \pthetatok(\chssubword \mid \chs[<t])$ allows us to study a context-specific tokenisation bias instead of aggregated ones.
}
Given these definitions, we can write the causal effect of interest as follows.

\begin{definition}\label{def:ite_estimand}
    The \defn{causal effect of a subword} $\subword$ on a model's ability to predict character-string $\chssubword$ is:
    \begin{align}\label{eq:ite_estimand}
        \atemerge \defeq \underbrace{\expectpotentialoutcome[1]}_{\substack{\text{performance on $\chssubword$} \\ \text{when trained with $\subword$}}} - \underbrace{\expectpotentialoutcome[0]}_{{\substack{\text{performance on $\chssubword$} \\ \text{when trained without $\subword$}}}}\hspace*{-1.5em}
    \end{align}
\end{definition}

Importantly, a model is trained with a tokeniser that either includes $\subword$ or does not.
Thus, for any model, we can only observe either $\expectpotentialoutcome[1]$ or $\expectpotentialoutcome[0]$ with the other term being counterfactual.
The causal analysis literature provides us with methods---and the relative assumptions that need to be met---to impute the counterfactual term from observable outcomes in a principled way.
\cref{def:ite_estimand} represents an individual treatment effect (ITE), as it defines the causal effect of a specific subword.
Methods to estimate an ITE, however, require strong assumptions, if at all applicable \citep{lu-etal-2018-estimating}.
To avoid this requirement, we thus focus on average effects, as is common in the econometrics literature \citep{angrist-pischke-2015-mastering}.

\begin{definition}\label{def:ate_estimand}
The \defn{expected effect of a subword} on the model's ability to predict its character-string is:\looseness=-1
    \begin{align}\label{eq:ate_estimand}
        \ate \defeq \expect_{\subword}\left[
        \expectpotentialoutcome[1] - \expectpotentialoutcome[0]
        \right]
    \end{align}
\end{definition}

In other words, we simplify the causal estimation problem by focusing on the expected causal effect across a population of subwords.
The causal effect for each specific subword $\subword$ might thus be larger or smaller than this expected effect; the expected effect we measure, however, gives us a best guess (in terms of mean squared error) of what their effect would be, given no additional information.

\subsection{Tokenisation Bias in Perfect and in Untrained Models}
\label{sec:bias_in_perfect}

We now analyse this causal effect under \defn{perfect models}: LMs for which $\pthetatok$ exactly matches the data-generating distribution $\ptok$.

\begin{restatable}{theorem}{noeffecttheorem}  \label{thm:no_effect}
    Assume we have a training process that---regardless of the choice of tokeniser $\tokeniser$---always returns perfect models, i.e., models for which $\pthetatok(\subwords) \mathop{=} \ptok(\subwords)$.
    In this case, the causal effect of a subword $\subword$ is always zero, i.e., $\atemerge = 0$.
\end{restatable}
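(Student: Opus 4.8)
The plan is to show that, under the perfect‑model assumption, the character‑level distribution recovered from the trained model is exactly the data‑generating distribution $p(\chs)$ \emph{irrespective of which tokeniser was used}. Once this is established, both potential outcomes $\expectpotentialoutcome[1]$ and $\expectpotentialoutcome[0]$ collapse to the same tokeniser‑free quantity, so their difference — and hence $\atemerge$, and then $\ate$ — vanishes.

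The key step I would carry out first is the identity: for any tokeniser $\tokeniser=(\vocab,\tok,\detok)$, if $\pthetatok(\subwords)=\ptok(\subwords)$ for all $\subwords\in\subwordsset$, then $\pthetatok(\chs)=p(\chs)$ for every character‑string $\chs$. Starting from the marginalisation $\pthetatok(\chs)=\sum_{\subwords\in\subwordsset}\pthetatok(\subwords)\,\one\{\chs=\detok(\subwords)\}$ and substituting $\pthetatok=\ptok$, I use \cref{eq:subword_ch_connection}: $\ptok$ is supported only on strings of the form $\tok(\chs')$, and since $\tok$ is a function, for a fixed $\chs$ there is exactly one such string, namely $\subwords=\tok(\chs)$, that satisfies $\detok(\subwords)=\chs$ (using losslessness, $\detok(\tok(\chs))=\chs$); it carries mass $p(\chs)$. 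All other terms in the sum are zero, so it collapses to $p(\chs)$. Applying the same argument to the counterfactual tokeniser $\tokeniser'=(\vocab',\tok',\detok')$ gives $\pthetaprimetokprime(\chs)=p(\chs)$ as well.

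I would then translate this back to the conditional probabilities in \cref{eq:potential_outcome}. However the conditional character probability is read off — directly in character space via the autoregressive factorisation of \cref{eq:ch_dist}, or through the subword model under the no‑tokenisation‑mismatch assumption already invoked in the excerpt — it is a deterministic function of the recovered character marginal alone. Since that marginal equals $p(\cdot)$ for both the observed and the counterfactual model, we get $\pthetatok(\chssubword\mid\chs[<t])=p(\chssubword\mid\chs[<t])=\pthetaprimetokprime(\chssubword\mid\chs[<t])$ for every context $\chs[<t]$. Taking logarithms and averaging over contexts (whose distribution does not depend on the tokeniser and, in the perfect case, coincides for both models) yields $\expectpotentialoutcome[1]=\expect_{\chs[<t]}\!\left[\log p(\chssubword\mid\chs[<t])\right]=\expectpotentialoutcome[0]$. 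Hence $\atemerge=\expectpotentialoutcome[1]-\expectpotentialoutcome[0]=0$, and $\ate=\expect_{\subword}[\atemerge]=0$ follows immediately.

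The main obstacle is the first step: making the "collapsing sum" argument fully rigorous. Several subword‑strings may detokenise to the same $\chs$, so I must carefully use that $\ptok$ places positive mass only on $\tok(\chs)$ — this is exactly where \cref{eq:subword_ch_connection} and the losslessness $\detok\circ\tok=\mathrm{id}$ do all the work, and it is the only place structural properties of tokenisation enter. A secondary, minor point is to be explicit about how $\pthetatok(\chssubword\mid\chs[<t])$ is defined and to note that the no‑mismatch assumption is what guarantees it depends only on the recovered character marginal rather than on the particular tokenisation path; everything after that is bookkeeping.
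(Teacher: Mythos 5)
Your proposal is correct and follows essentially the same route as the paper: first show that for a perfect model the recovered character-level distribution satisfies $\pthetatok(\chs)=p(\chs)$ for any tokeniser (your collapsing-sum argument via the support of $\ptok$ and $\detok\circ\tok=\mathrm{id}$ is the same step the paper performs via injectivity of $\tok$ and \cref{eq:subword_ch_connection}), and then express $\pthetatok(\chssubword\mid\chs[<t])$ through character-level prefix marginals so that both potential outcomes reduce to the tokeniser-free quantity $\expect_{\chs[<t]}[\log p(\chssubword\mid\chs[<t])]$, giving $\atemerge=0$. The only cosmetic difference is that the paper writes the conditional explicitly as the ratio $p(\chs[<t]\circ\chssubword\circ\alphabet^*)/p(\chs[<t]\circ\alphabet^*)$ rather than appealing to the no-mismatch assumption, which your main argument does not actually need either.
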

\begin{proof}
See the proof in \cref{app:no_effect_proof}.
\end{proof}

If the ideal effect is always zero, why bother estimating it? 
In practice, LMs are imperfect, and biases---both inductive and learned---may lead models to assign different probabilities to the same character-string under different tokenisations (all else held equal). Measuring this effect in practice is thus important. As an example, we show that at initialisation this causal effect is relatively large.

\begin{restatable}{theorem}{initialisationeffecttheorem}  \label{thm:initialisation_effect}
    Assume that at initialisation our language model outputs the uniform distribution $\pthetatok(\subword \!\mid\! \subwords_{<t}) \!=\! \frac{1}{|\vocab|}$.\footnote{See support for this in, e.g., \citet{chang-bergen-2022-word}.}
    In this case, the causal effect of a subword $\subword$ at initialisation is $\atemerge \gtrapprox \log |\vocab|$.
\end{restatable}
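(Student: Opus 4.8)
The plan is to evaluate the two potential outcomes in \cref{eq:ate_estimand} directly under the uniform-initialisation assumption and then subtract them. The key simplification is that an untrained model assigns probability $\frac{1}{|\vocab|}$ to every subword in every context, so the log-probabilities inside \cref{eq:potential_outcome} are constants and the expectation over contexts $\chs[<t]$ becomes trivial. It therefore suffices to count how many subwords are needed to emit $\chssubword$ in each of the two scenarios.

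For the treated outcome $\expectpotentialoutcome[1]$ we have $\subword \in \tokeniser$; assuming no tokenisation mismatch (as throughout \cref{sec:tok_bias}), $\chssubword$ is emitted as the single subword $\subword$, so $\pthetatok(\chssubword \mid \chs[<t]) = \pthetatok(\subword \mid \tok(\chs[<t])) = \frac{1}{|\vocab|}$ and hence $\expectpotentialoutcome[1] = -\log|\vocab|$. For the control outcome $\expectpotentialoutcome[0]$ the counterfactual tokeniser $\tokeniser'$ does not contain $\subword$, so it must split $\chssubword$ into a sequence $\subwords^{\prime\chssubword}$ of $\ell \defeq |\subwords^{\prime\chssubword}|$ subwords with $\ell \ge 2$: a vocabulary is a \emph{set} of character-spans, so the only single subword whose span equals $\chssubword$ is $\subword$ itself, which $\tokeniser'$ lacks (note also that any length-one span lies in $\alphabet \subseteq \vocab'$ and so is never the removed subword). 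Under the uniform model each of the $\ell$ factors equals $\frac{1}{|\vocab'|}$, giving $\expectpotentialoutcome[0] = -\ell\log|\vocab'|$, and therefore $\atemerge = \ell\log|\vocab'| - \log|\vocab|$.

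To conclude, I would argue that deleting one subword changes the vocabulary size by only a small constant — essentially one, since $\subword$ is a merged span and $\alphabet \subseteq \vocab'$ already guarantees losslessness without re-adding anything — so $\log|\vocab'| \approx \log|\vocab|$ at realistic scales. This yields $\atemerge \approx (\ell-1)\log|\vocab| \ge \log|\vocab|$, i.e.\ $\atemerge \gtrapprox \log|\vocab|$, which is the claim; the $\gtrapprox$ absorbs both the $|\vocab'|\approx|\vocab|$ approximation and the inequality $\ell\ge 2$. The one delicate point is the control term: the bound would collapse if $\chssubword$ could still be represented as a single (different) subword under $\tokeniser'$, or if $|\vocab'|$ could be far smaller than $|\vocab|$, so it is precisely the ``$\ell\ge 2$'' and ``$|\vocab'|\approx|\vocab|$'' claims that carry the argument and warrant the brief justifications above.
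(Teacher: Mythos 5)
Your proof is correct and follows essentially the same route as the paper's: under the uniform model the treated outcome is $\approx -\log|\vocab|$ while the untreated string must split into at least two subwords and so receives $\lessapprox -2\log|\vocab|$, giving $\atemerge \gtrapprox \log|\vocab|$. Your extra care about $|\vocab'|$ versus $|\vocab|$ and about why no other single subword can span $\chssubword$ only makes explicit what the paper leaves implicit (the paper instead flags, in a footnote, the small correction from marginalising over longer non-canonical tokenisations, which your $\gtrapprox$ likewise absorbs).
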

\begin{proof}
See the proof in \cref{app:initialisation_effect}.
\end{proof}

\section{Estimating Tokenisation Bias}

Now, we get to estimating the causal effect we defined; this is typically done in three steps. First, we define a \defn{causal estimand}, the theoretical quantity that represents the effect of interest; this is the value in \cref{def:ate_estimand}. Second, we rewrite this causal estimand in terms of quantities that we can actually observe, thus defining a \defn{statistical estimand}. Finally, we define an \defn{estimator}, a statistical procedure to approximate the statistical estimand.

Importantly, the causal estimand in \cref{eq:ate_estimand} is non-trivial to estimate, as it contains a counterfactual; this is exposed by a simple decomposition:
\begin{align}\label{eq:ate_decomposition}
     \ate = \underbrace{\expect_{\subword}\left[\expectpotentialoutcome[1]\right]}_{\circled{1}} - \underbrace{\expect_{\subword}\left[\expectpotentialoutcome[0]\right]}_{\circled{2}} 
\end{align}
Notably, we can observe the potential outcomes in expectation $\circled{1}$ only for subwords in the given tokeniser ($\subword \in \tokeniser$), while expectation $\circled{2}$ is observable only for subwords not in that tokeniser ($\subword \notin \tokeniser$).
Naively comparing a model's average performance in these two groups of in- and out-of-vocabulary items would return a biased estimate of the causal effect.
This is due to the way tokenisers are developed: subwords are not randomly included in the vocabulary, but carefully selected under some objective. As a subword's assigned treatment is not random, the two subgroups are not comparable, and there may thus be confounders.

Luckily, a popular class of tokenisation algorithms---which includes byte-pair encoding (\bpe, \citealp{sennrich-etal-2016-neural}) and WordPiece (\wordpiece, \citealp{schuster-nakajima-2012-voice})---select subwords iteratively, one-at-a-time, to compose their vocabularies.
This creates a ranking among subwords which deterministically defines their treatment assignment (i.e., whether $\subword \in \tokeniser$ or not); we will leverage the non-randomness in this treatment assignment to estimate our desired effect. We describe these methods next.

\subsection{Sequential Vocabulary Construction}
\label{sec:vocab_construction}

Let $\dataset = \{\chs[n]\}_{n=1}^{N}$ be a dataset of character-strings.
Bottom-up tokenisers typically define an \defn{objective function}, $\objectivefunc(\mergepair[1], \mergepair[2], \dataset)$, which measures the benefit of including a new subword $\mergepair[\texttt{new}] \!=\! \mergepair[1] \circ \mergepair[2]$ in the vocabulary.
Given a target vocabulary size $\vocabsize\!+\!\size{\alphabet}$, 
their algorithm initialises a vocabulary as $\vocab_0 \!=\! \alphabet$ and a dataset as $\dataset_0 \!=\! \dataset$.
For $\vocabsize$ iterations, it then selects the pair of subwords $\mergepair[1]_k, \mergepair[2]_k$ which maximises objective $\objectivefunc$ over the current dataset, adds it to the vocabulary $\vocab_{k} \!=\! \vocab_{k\!-\!1} \cup \{\mergepair[\texttt{new}]_k\}$, and applies it to $\dataset_{k} = \{\mergefunc{\mergestring{\mergepair[1]_k}{\mergepair[2]_k}}(\subwords) \mid \subwords \in \dataset_{k\!-\!1}\}$,
where $\mergefunc{}$ replaces consecutive occurrences of $\mergepair[1]_k,\mergepair[2]_k$ with $\mergepair[\texttt{new}]_k$.
The algorithm then returns $\vocab_{\vocabsize}$.

While the choice of objective function is arbitrary, most tokenisers employed by current LMs use either the objective from \bpe or \wordpiece:
\begin{align}
    \objectivebpe(\mergepair[1], \mergepair[2], \dataset_k) &\defeq 
    \countfunc(\subwordstring[false]{\mergepair[1], \mergepair[2]}, \dataset_k) \label{eq:bpe_obj}\\
     \objectivewordpiece(\mergepair[1], \mergepair[2], \dataset_k) &\defeq 
    \frac{
    \countfunc(\subwordstring[false]{\mergepair[1], \mergepair[2]}, \dataset_k)}{
    \countfunc(\subwordstring[false]{\mergepair[1]}, \dataset_k)\,
    \countfunc(\subwordstring[false]{\mergepair[2]}, \dataset_k)} \nonumber
\end{align}
where we use $\countfunc(\subwords, \dataset)$ to denote the number of occurrences of subword-string $\subwords$ in dataset $\dataset$.
At each iteration, \bpe's objective ($\objectivebpe$) selects the most frequent subword-pair in dataset $\dataset$, while \wordpiece's ($\objectivewordpiece$) selects the subword-pair with the largest pointwise mutual information in it.

\subsection{The Regression Discontinuity Design}
\label{sec:rd_design}

We define a principled estimator for tokenisation bias based on the regression discontinuity (RD) design \citep{thistlewaite1960regression}.\footnote{RD design is a quasi-experimental paradigm that exploits a deterministic relationship between an observed characteristic, known as the running variable, and the treatment assignment.
Unlike a true experiment, a quasi-experiment does not randomly assign participants to control and treatment groups.}
Let us consider running the algorithm above (\cref{sec:vocab_construction}) for $\vocabsize_+ \gg \vocabsize$ iterations and obtain a list of subwords:\looseness=-1
\begin{align}\label{eq:many_merges}
    [\mergepair[\texttt{new}]_1, \mergepair[\texttt{new}]_2, \smalldots, \mergepair[\texttt{new}]_{\vocabsize}, \smalldots, \mergepair[\texttt{new}]_{\vocabsize_+}]
\end{align}
The index $k \in \{1, \smalldots, \vocabsize_+\}$ of a subword in this sequence represents the iteration at which it would be selected by the tokenisation algorithm.
Index $k$ thus determines a natural ordering of how subwords are added to a bottom-up tokeniser.
Now, let us denote it as an index-valued random variable $\ordermerge$; this variable deterministically defines a subword's treatment assignment:
\begin{align}\label{eq:rd_treatment_assignment}
    \treatment_{\subword} &= \one\{\ordermerge \leq \vocabsize\}
\end{align}
In words, a subword is thus included in the vocabulary if its index $\ordermerge$ is smaller than $\vocabsize$.
In RD design, $\ordermerge$ is termed a \defn{running variable} and $\vocabsize$ is the \defn{cutoff} of this running variable.

For RD design to be applicable, this cutoff must be predetermined and exogenous---i.e., independent of potential outcomes---ensuring an unbiased treatment assignment.
This holds for bottom-up tokenisers, where vocabulary sizes are set beforehand.
Notably, we still cannot directly compare in- and out-of-vocabulary subwords, as they have different values of the running variable: by definition these groups have, respectively,  running variable values of $\runningvar\leq\cutoff$ or $\runningvar>\cutoff$.
This makes them systematically different and introduces bias.
The RD design addresses this issue by focusing on a \defn{local treatment effect} instead:
\begin{align}\label{eq:late_decomposition}
     \late 
     \defeq \expect_{\subword}\left[\expectpotentialoutcome[1] - \expectpotentialoutcome[0] \mid \runningvar \right]
\end{align}
Notably, $\late$ is thus a localised causal effect representing the effect expected for subwords added to the vocabulary at a specific index $\runningvar$.

We now leverage the cutoff point $\cutoff$ to estimate this local causal effect.
In short, since treatment assignment is deterministic and the cutoff is exogenous, this assignment mechanism is \enquote{as good as random} \citep{angrist-pischke-2015-mastering}.
If the outcome $\expectpotentialoutcome$ is a continuous function of the running variable, thus, it should transition smoothly across the cutoff in the absence of treatment, making any discontinuity at this point solely due to treatment.\footnote{Notably, our running variable $\runningvar$ is discrete, and thus not continuous by definition. In our experiments, we consider large enough windows around the cutoff and thus assume our variable is fine-grained enough to allow for such an estimation. Alternatively, we could have used other running variables for our experiments instead, such as, e.g., the objective function $\objectivefunc$.
We also note that, while the RD design typically assumes continuous running variables, it can be applied to discrete ones, such as the rank in our case \citep{cattaneo2024extensions}.}

\begin{figure*}[!t]
    \centering\small
    \includegraphics[trim={0 .5cm 0 0},clip,width=\linewidth]{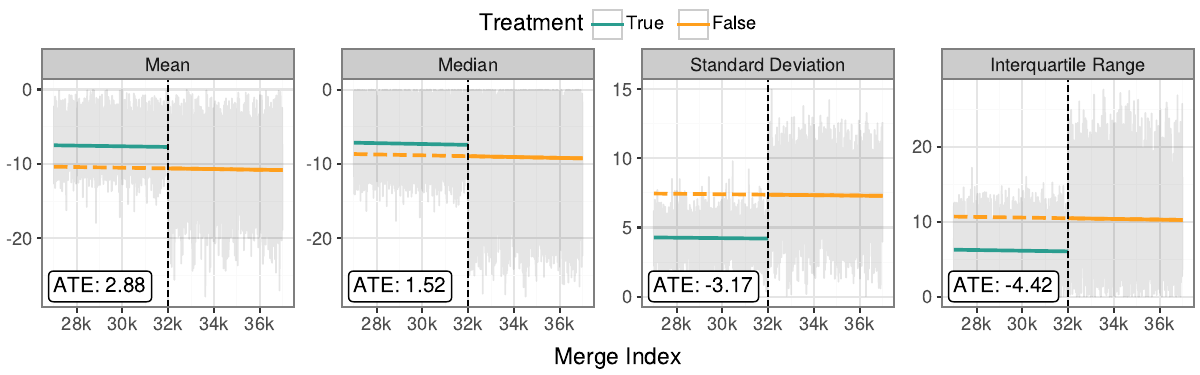}
    \vspace{-15pt}
    \caption{Tokenisation bias of $\subword\in\tokeniser$ for the last checkpoint of a model trained with \bpe tokeniser and $\vocabsize\mathop{=}\q{32}{\thousand}$.
    The x-axis shows the running variable $\ordermerge$, and the y-axis shows different outcome variables: mean, standard deviation, median, and interquartile range of a $\chssubword$'s log-probability across contexts.
    Subwords on the left-hand side of the cutoff are in the tokeniser. The dashed orange line indicates the estimated counterfactual outcome.
    }
    \label{fig:bpe32000_metrics}
    \vspace{-6pt}
\end{figure*}

\subsection{The Regression Discontinuity Estimand}

Before introducing a statistical estimand, we need to introduce the functional relationship between potential outcomes and the running variable.
We can write this formally as:
\begin{subequations} \label{eq:cef}
\begin{align}
    &\expect_{\subword}[\expectpotentialoutcome[0] \mid \runningvar] = \funcrel(\ordermerge) \label{eq:cef_control} \\
    &\expect_{\subword}[\expectpotentialoutcome[1] \mid \runningvar] = \funcrel(\ordermerge) + \aterunningvar \label{eq:cef_treat}
\end{align}
\end{subequations}
where \cref{eq:cef_treat} immediately follows from \cref{eq:late_decomposition}.
As typical in RD, we posit a smooth relationship between the running variable and potential outcomes.

\begin{assumption}[Continuity Assumption]\label{ass:continuity}
The conditional expectation of a potential outcome is continuous at the cutoff. That is, both limits exist:
\begin{subequations}
\begin{align}
    \lim_{\cutoffdelta \to \cutoff} &\expect_{\subword}[\expectpotentialoutcome[0] \mid \ordermerge\mathop{=}\cutoffdelta] \\
    \lim_{\cutoffdelta \to \cutoff} &\expect_{\subword}[\expectpotentialoutcome[1] \mid \ordermerge\mathop{=}\cutoffdelta]
\end{align}
\end{subequations}
\end{assumption}

Now, we note that \cref{eq:rd_treatment_assignment} implies that there is no value of the running variable for which we can \emph{observe} both potential outcomes, where we write an \defn{observed outcome} as:\footnote{Assuming consistency, i.e., assuming that a potential outcome under the observed treatment assignment equals the observed outcome \citep{cole-and-frangakis-2009-consistency}.}
\begin{align}\label{eq:observed_outcomes}
    \observedoutcome =
    \begin{cases}
        \expectpotentialoutcome[1] & \text{if } \treatment_{\subword}\mathop{=}1 \\
        \expectpotentialoutcome[0] & \text{otherwise}
    \end{cases}
\end{align}
Luckily, \citet{hahn-etal-2001-identification} show that the only required assumption to identify the causal estimand in \cref{eq:late_decomposition} is that \cref{ass:continuity} holds.
We can write a statistical estimand at the cutoff as:
\begin{align}\label{eq:rdestimand}
    \rdestimand = 
    \lim_{\cutoffdelta \to \cutoff^{-}} &\expect_{\subword}[\observedoutcome \mid \ordermerge\mathop{=}\cutoffdelta] \\
    &- \lim_{\cutoffdelta \to \cutoff^{+}} \expect_{\subword}[\observedoutcome \mid \ordermerge\mathop{=}\cutoffdelta] \nonumber
\end{align}
In words, the RD estimand is the discontinuity in the conditional expectation function at the cutoff.\footnote{
If we could estimate \cref{eq:rdestimand}'s expectations, we would have a principled (non-parametric) estimator for $\aterunningvar$.
Obtaining such estimates, however, is tricky \citep[chapter 6]{angrist2009mostly}.
First, working in a small neighbourhood of the cutoff means we would have little data.
Second, sample averages are biased when estimated on one side of a boundary.}

\subsection{The Regression Discontinuity Estimator}

In the limit of \cref{eq:rdestimand}, the functional terms $\funcrel(\ordermerge)$ in the observed effects cancel out.
However, we do not have a large number of samples exactly at the cutoff.
In practice, thus, RD is estimated by extrapolating over values of the running variable in a window around the cutoff; in this case, these functional terms $\funcrel(\ordermerge)$ do not cancel out.
To control for them, we first take a conditional expectation of $\observedoutcome$, which we write in terms of \cref{eq:cef}:
\begin{align}\label{eq:observed_cef}
    \expect_{\subword}[\observedoutcome \mid \runningvar] = \funcrel(\ordermerge) + \aterunningvar\,\treatment_{\subword}
\end{align}
We then assume a parametric form for $\funcrel$ (e.g., linear in $\runningvar$) and $\aterunningvar$ (e.g., a constant). 
Given a set of observed subwords $\observedset$ in a window around the cutoff, we obtain an estimator of $\aterunningvar$ by minimising the squared error of the functional fit of $\funcrel$ and $\aterunningvar$.

\begin{estimator}\label{defn:rd_estimator}
    Given a parametric function $\funcrel$ and a set of observed subwords around the cutoff $\observedset$, the \defn{regression discontinuity estimator} $\rdestimator$ is:
    \begin{align}\label{eq:rd_estimator}
        \argmin_{\funcrel, \rdestimand} \sum_{\subword \in \observedset}(\observedoutcome \!-\! \funcrel(\ordermerge) \!-\! \rdestimand\,\treatment_{\subword})^2
    \end{align}
    This is an unbiased estimator of the local effect $\aterunningvar$ at $\runningvar=\cutoff$ under \cref{ass:continuity} and assuming $\funcrel$ provides an adequate description of expected potential effects \citep{angrist2009mostly}.
\end{estimator}

Notably, the validity of an RD estimator hinges on the correct specification of $\funcrel$ and on the size of the window used to obtain the set $\observedset$.
On the one hand, a larger window allows for a bigger sample size but relies more on the correct estimation of $\funcrel$.
On the other hand, a smaller window depends less strongly on $\funcrel$ but may not have enough samples for precise statistical estimation.
Thus, RD requires careful tuning of the window size to control this bias-variance trade-off.
In \cref{fig:bpe32000_metrics}, we show an example of this method in practice.

\begin{figure}[!t]
    \centering\small
     \begin{subfigure}[b]{0.49\columnwidth}
        \includegraphics[width=\columnwidth]{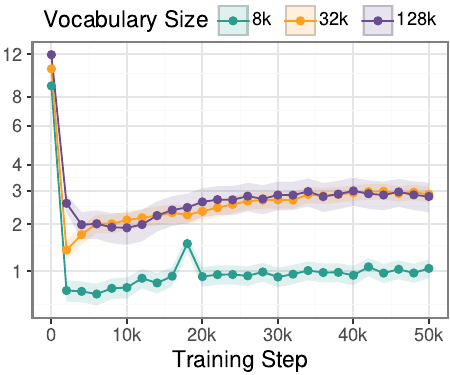}
     \end{subfigure}%
     \hfill
     \begin{subfigure}[b]{0.49\columnwidth}
        \includegraphics[width=\columnwidth]{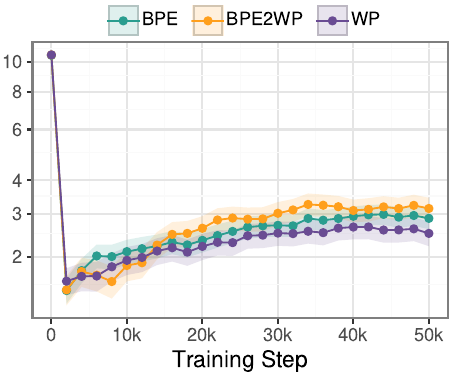}
     \end{subfigure}
     \vspace{-3pt}
    \caption{Tokenisation bias $\late$ of \q{57}{\million} across training for  (left) \bpe across vocabulary sizes \q{8}{\thousand}, \q{32}{\thousand}, and \q{128}{\thousand}; and for (right) vocabulary size \q{32}{\thousand} across tokenisers \bpe, \wordpiece, and \bpewp.}
    \label{fig:vocab_sizes_and_tokenisers}
    \vspace{-4pt}
\end{figure}

\section{Experimental Setup}

We conduct experiments on a suite of Llama-style transformer models of varying scales, trained with different tokenisation schemes and vocabulary sizes.\footnote{
Since our method requires knowing which subwords would be added after the cutoff, we could not use open-source LMs, as this information is typically not published even for open-source models like Pythia \citep{biderman-etal-2023-pythia} and OLMo \citep{groeneveld-etal-2024-olmo}. Further, their tokeniser construction details are often unreproducible. We therefore train new tokenisers and models from scratch.} 
More details in \cref{app:implementation_details}.

\paragraph{Models and Tokenisers.}
Unless otherwise specified, we run our experiments with a \bpe tokeniser with vocabulary size $\vocabsize\mathop{=}\q{32}{\thousand}$, and a Llama \citep{touvron2023llama} LM with \q{57}{\million} non-embedding parameters trained on the \minipile \citep{kaddour2023minipile} dataset for \q{50}{\thousand} steps.
To explore how tokenisation bias changes as a function of: (i) model size, we further train Llama variants with approximately \q{340}{\million}, and \q{850}{\million} non-embedding parameters;
(ii) vocabulary size, we experiment with \bpe's with $\vocabsize$ equal \q{8}{\thousand}, \q{32}{\thousand}, and \q{128}{\thousand};
(iii) tokenisation algorithm, we experiment with a standard \wordpiece tokeniser, and a tokeniser using \wordpiece's tokenisation function but \bpe's vocabulary, which we call \bpewp;
(iv) embedding tying and training dataset, we train a \q{100}{\million} parameter model with and without tied input-output embeddings on \q{20}{\billion} tokens from the \fineweb corpus \citep{penedo2024the}.

\paragraph{Evaluation and Data Collection.}
All models are evaluated on the \minipile validation set. 
To compute $\observedoutcome$, we collect subwords' log-probabilities and aggregate them across the contexts in which they appear.
For our regression discontinuity estimation, we focus on a window of \q{5}{\thousand} subwords around the tokeniser’s vocabulary cutoff.

\section{Results}

In this section, we report the estimated tokenisation bias $\late$ in our models.
\cref{fig:bpe32000_metrics} (left) presents tokenisation bias in \bpe with default parameters.
We estimate this bias to be $\float[2]{2.88}$ nats, with treated character-strings $\chssubword$ having a log-probability of \float[2]{-7.720603} and untreated ones \float[2]{-10.599314}.
This means a $\chssubword$ can be assigned roughly \integer{17}
times less probability due solely to tokenisation, implying LMs are highly susceptible to tokenisation bias.
We next examine this effect in detail.

\paragraph{Effect Across Training.}  
\cref{fig:vocab_sizes_and_tokenisers} (left; orange line) shows the causal effect evolving through training. 
As predicted (\cref{thm:initialisation_effect}), this effect is large at the start of training. 
Notably, it drops sharply by step \q{2}{\thousand} and then slowly grows again.
This is counterintuitive. 
Since an ideal model would be unaffected by tokenisation (\cref{thm:no_effect}), one would expect a decline in $\late$ across training.
Instead, our results suggest models become more biased as they improve.

\begin{figure*}[!t]
    \centering
    \begin{subfigure}[b]{0.25\linewidth}
    \includegraphics[width=\columnwidth]{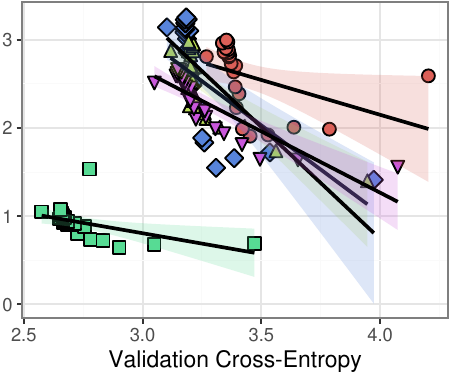}
    \end{subfigure}%
    \hfill
    \begin{subfigure}[b]{0.25\linewidth}
    \includegraphics[width=\columnwidth]{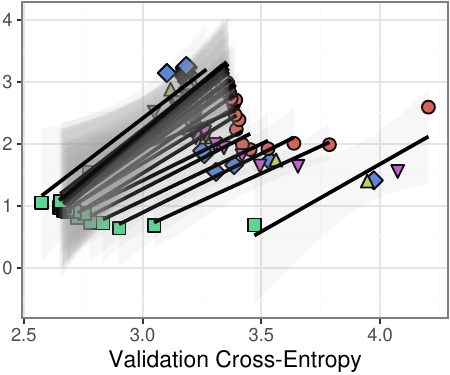}
    \end{subfigure}%
    \hfill
    \begin{subfigure}[b]{0.25\linewidth}
    \includegraphics[width=\columnwidth]{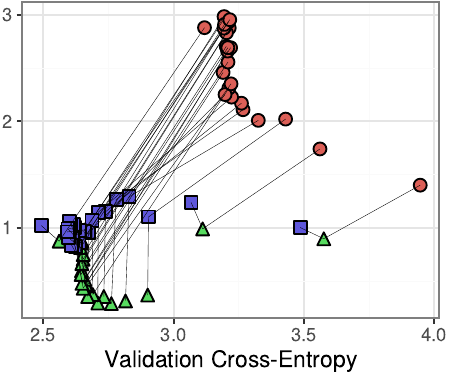}
    \end{subfigure}%
    \hfill
    \begin{subfigure}[b]{0.25\linewidth}
    \includegraphics[width=\columnwidth]{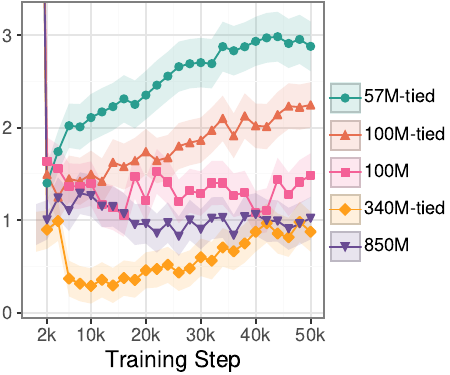}
    \end{subfigure}%
    \caption{(Left to Center-right) Tokenisation bias $\late$ vs.\ validation cross-entropy of trained models. Each point represents a training step of a model trained with a specific tokeniser.
    (Left) Linear fits per tokeniser and across training steps, size fixed to \q{57}{\million};
    (Center-left) Linear fits per training step and across tokenisers, size fixed to \q{57}{\million};
    (Center-right) Linear fits per training step and across model sizes, tokeniser fixed as \bpe with $\vocabsize=\q{32}{\thousand}$.
    (Right-most) Tokenisation bias $\late$ across training of models with different sizes and \bpe tokeniser with $\vocabsize=\q{32}{\thousand}$. }
    \label{fig:scaling_plot}
    \vspace{-8pt}
\end{figure*}

\vspace{-1pt}
\paragraph{Comparison Across Vocabulary Sizes.}  
\cref{fig:vocab_sizes_and_tokenisers} (left) also shows how tokenisation bias varies with vocabulary size ($\vocabsize \in \{\q{8}{\thousand}, \q{32}{\thousand}, \q{128}{\thousand}\}$).
We find weaker biases when using smaller vocabularies. 
This might be because for smaller $\cutoff$, the character-strings $\chssubword$ near the cutoff will be more frequent; untreated strings $\chssubword$ will thus see more training, which potentially reduces tokenisation bias.
Beyond a critical size, the effect stabilises, with \q{128}{\thousand} and \q{32}{\thousand} showing similar biases despite a large difference in vocabulary size (see also \cref{fig:bpe_vocabsize_metrics}, \cref{app:addtional_plots}).

\vspace{-1pt}
\paragraph{Comparison Across Tokenisers.}
\cref{fig:vocab_sizes_and_tokenisers} (right) shows how the causal effect changes as a function of the used tokenisation and objective functions, displaying results for \bpe (with $\tokbup, \objectivebpe$), \bpewp (with  $\tokbup, \objectivewordpiece$), and \wordpiece (with  $\toklongest, \objectivewordpiece$).
In this figure, we see that tokenisation bias seems consistent across these tokenisers (see also \cref{fig:alltokenisers32000_metrics}, \cref{app:addtional_plots}).

\vspace{-1pt}
\paragraph{Comparison Across Model Sizes.}
\Cref{fig:scaling_plot} (right) shows how tokenisation bias changes as a function of model size.
In this figure, we see that bias is smaller on larger (e.g., $\q{340}{\million}$ parameters) than in small models (e.g., $\q{57}{\million}$).
Among our largest models, however (with either $\q{340}{\million}$ or $\q{850}{\million}$ parameters), we see little difference in tokenisation bias.
Further, even in the largest models, bias does not seem to decrease across training.
Together, these results suggest that tokenisation bias may be a persisting property of LMs.

\vspace{-1pt}
\paragraph{Tokenisation Bias  vs.\ Model Quality.}
We next explore in more detail the previous observation that tokenisation bias grows across training (\cref{fig:vocab_sizes_and_tokenisers})---suggesting an inverse-scaling effect where tokenisation bias strengthens as models improve \citep{mckenzie2023inverse}.
\cref{fig:scaling_plot} plots the estimated $\late$ against model quality (quantified as cross-entropy). 
As can be seen, two opposing trends emerge: across training, better models show larger causal effects (shown on the left); across tokenisers with different vocabulary sizes, better models show smaller effects (shown on the center-left).
Further, this relationship seems to change across model scales (shown on the center-right).
This highlights that tokenisation bias is a complex phenomenon, which can either improve or worsen as models get better.

\vspace{-1pt}
\paragraph{Other Causal Effects.}
Our framework also allows us to study tokenisation-related biases beyond average model performance. 
Here, we examine the effect of tokenisation bias on the stability of model outputs across contexts.
To this end, we define new potential outcome variables:
\begin{align}\label{eq:other_potential_outcomes}
    \otherpotentialoutcome[\treatment] \defeq \generalaggfunction_{\chs[<t]}\left[
    \log \pthetatok(\chssubword \mid \chs[<t])
    \right]
\end{align}
where we consider standard deviation, median, and interquartile range as $\generalaggfunction$.
Results are in \cref{fig:bpe32000_metrics} (also in \cref{fig:bpe_vocabsize_metrics,fig:alltokenisers32000_metrics}, \cref{app:addtional_plots}). 
In particular, standard deviation results show that whether a subword is present in an LM's vocabulary significantly affects output stability: $\chssubword$ of in-vocabulary subwords exhibit far less variation in log-probability across contexts than out-of-vocabulary ones.
This again underscores the strong influence of tokenisation on model behaviour.

\begin{figure}[!t]
    \centering
    \includegraphics[width=\columnwidth]{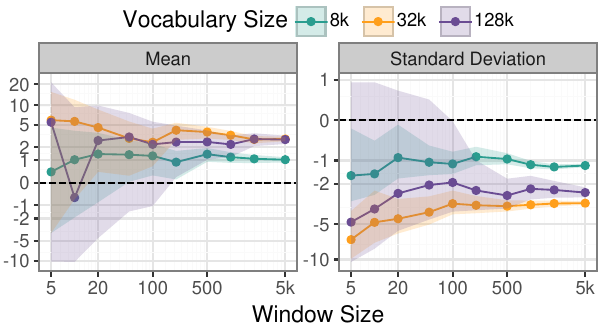}\vspace{-5pt}
    \caption{Estimated effect $\rdestimator$ (y-axis) vs.\ window size used to estimate it (x-axis). Results for fully trained model with \bpe and $\cutoff\in\{\q{8}{\thousand}, \q{32}{\thousand}, \q{128}{\thousand}\}$. Shaded regions correspond to standard errors.
    }%
    \label{fig:bpe_windowsizes_subset}
    \vspace{-8pt}
\end{figure}

\paragraph{Robustness of the RD Estimation.}
Finally, we analyse the robustness of our estimates with respect to: (i) window size used to select $\observedset$, and (ii) functional form of $\funcrel$ used for the regression fit.
First, \cref{fig:bpe_windowsizes_subset} shows that the estimated effect is unstable for window sizes smaller than \integer{500}, but stabilises when using at least \q{1}{\thousand} subwords on both sides of the cutoff (see also \cref{fig:bpe_windowsizes}, \cref{app:addtional_plots}).
Second, \cref{fig:loess_regression_model} (in \cref{app:addtional_plots}, due to space constraints) shows that more flexible specifications of $\funcrel$ return similar estimates to our linear choice, though fitting noise.

\section{Tokenisation Bias in Context}
\label{sec:discussion}

We now place our findings in the broader context of existing work on tokenisation, highlighting theoretical implications, potential links to known biases, and opportunities for improved fairness.

\subsection{Implications of Tokenisation Bias}

\paragraph{Theoretical Underpinnings of Tokenisation.}
Our results align with recent theoretical work on the role of tokenisation in language modelling.
\citet{Rajaraman_2024} show that subword vocabularies endow Transformers with inductive biases suited to modelling higher-order Markov sources, while character-level models lag behind. This is supported by our finding that representing a character-string as two subwords instead of one reliably lowers its log-probability.

\paragraph{Length Bias.}
Tokenisation bias may help explain the well-documented length bias in LMs \citep{murray-chiang-2018-correcting, stahlberg-byrne-2019-nmt}, where longer subword sequences are less likely to be generated.
We find that character-strings tokenised as multiple subwords are systematically assigned lower probabilities, which may cause models to prefer generating shorter sequences. 

\paragraph{Multilingual Fairness.}
Tokenisation can also contribute to performance disparities across languages. \citet{petrov2023token_unfairness} show that under typical multilingual tokenisers, lower-resource languages tend to have longer tokenisations. 
While originally noted for its cost implications, we show this may also hurt performance as, \textit{ceteris paribus}, (i) longer subword-strings receive lower probability due to tokenisation bias and, thus, (ii) they may reduce the model’s likelihood of generating low-resource language content, reinforcing data sparsity effects.
These effects echo \citeposs{rust-etal-2021-good} findings, which show that language-specific tokenisers close much of the performance gap between monolingual and multilingual models, highlighting tokenisation as a key factor in multilingual performance.

\subsection{Going Forward: Uses for \texorpdfstring{$\late$}{local-ATE}  }

This paper provides new empirical evidence that tokenisation choices bias model predictions.
Prior studies have largely focused on observational evidence of these effects, often through adversarial prompts or indirect correlates \citep{chai-etal-2024-tokenization, wang2025tokenizationmattersdegradinglarge}.
Our estimator, by contrast, enables a more principled study of how vocabulary construction choices affect a LM's behaviour. We sketch two concrete use cases for $\ate$ below.

\paragraph{Measuring Impact on Lexical Generalisation.}
Adding subwords to the vocabulary causes models to treat the corresponding character span as an indivisible unit. This can be beneficial (e.g., in English, \subwordstring{pl} and \subwordstring{ay} do not have meanings on their own, but \subwordstring{play} does), but may hinder generalisation across morphological variants (e.g., \subwordstring{play} and \subwordstring{plays}). Tokenisation may thus impact model generalisation across orthographically similar lexical items.
Prior work already hints at this tradeoff: \citet{schafer-etal-2024-effect} show that duplicated entries in a vocabulary may hurt generalisation, while \citet{Toraman_2023} and \citet{schmidt-etal-2024-tokenization} show that vocabulary expansion helps only under morphology-aware tokenisation. 
These results suggest that adding too many subwords to the vocabulary can hurt lexical generalisation, a hypothesis our estimator could be adapted to test.

\paragraph{Optimising Tokenisation.}
Finding an optimal tokeniser is NP-complete \citep{whittington2024tokenisation, kozma-etal-2024-theoretical}. 
Further, existing heuristic metrics for tokeniser selection, such as subword frequency thresholds \citep{gowda-may-2020-finding} or R\'enyi efficiency \citep{zouhar-etal-2023-tokenization}, are often poorly correlated with downstream model performance \citep{schmidt-etal-2024-tokenization}. 
Our causal estimator offers a more grounded alternative.
If a practitioner aims to optimise held-out perplexity, a positive $\ate$ would signal that expanding the vocabulary (i.e., including more subwords) could help. Conversely, a negligible or negative $\ate$ might justify shrinking the vocabulary to gain efficiency without hurting performance. This moves us beyond heuristic selection toward a more systematic, model-level approach to tokeniser design.

\section{Conclusion}

We study a phenomenon we call \defn{tokenisation bias}: the extent to which a model’s outputs are affected by whether a subword appears in its tokeniser’s vocabulary. 
We propose a new method to measure it without re-training the model. 
We empirically show that character-strings tokenised as a single subword receive significantly more probability than when split, and that this effect intensifies over training.

\section*{Limitations}

This work estimates tokenisation bias: the extent to which a model’s output depends on whether a subword appears in its tokeniser’s vocabulary. 
Unfortunately, due to the costs associated with training large LMs, most of our experiments focused on relatively small models trained in a single language (English).
Investigating whether other model architectures, training procedures, and natural languages result in similar causal effects would be important to strengthen our conclusions.
Moreover, our method estimates a local causal effect (i.e., the effect for subwords in a window around the cutoff).
However, our results suggest that the estimated effect can be extrapolated further beyond the window size, as our regression estimates show an almost flat trend with respect to the running variable.

\section*{Acknowledgments}
\setlength{\intextsep}{0pt}
\setlength{\columnsep}{8pt}
\begin{wrapfigure}{l}{0.45\columnwidth}
    \includegraphics[width=0.45\columnwidth]{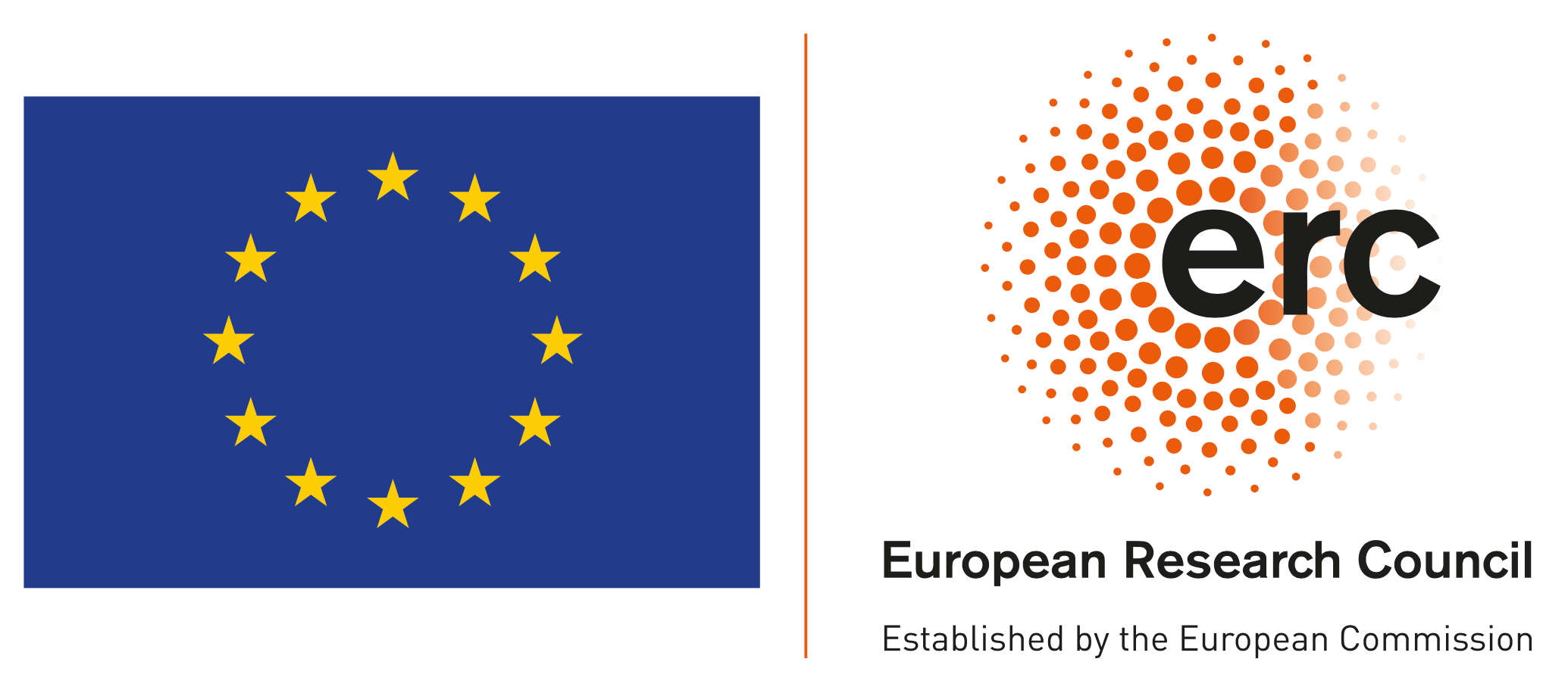}
\end{wrapfigure}
Pietro and Andreas received funding from the European Research Council (ERC) under the European Union’s Horizon 2020 Research and Innovation programme grant AVeriTeC (Grant agreement No. 865958).
We thank the anonymous reviewers for their helpful questions and comments.
We also thank Gregor Bachmann for his feedback on an earlier version of this paper.

\bibliography{biblio.bib}

\begin{thebibliography}{48}
\providecommand{\natexlab}[1]{#1}

\bibitem[{Ali et~al.(2024)Ali, Fromm, Thellmann, Rutmann, L{\"u}bbering, Leveling, Klug, Ebert, Doll, Buschhoff, Jain, Weber, Jurkschat, Abdelwahab, John, Ortiz~Suarez, Ostendorff, Weinbach, Sifa, Kesselheim, and Flores-Herr}]{ali-etal-2024-tokeniser}
Mehdi Ali, Michael Fromm, Klaudia Thellmann, Richard Rutmann, Max L{\"u}bbering, Johannes Leveling, Katrin Klug, Jan Ebert, Niclas Doll, Jasper Buschhoff, Charvi Jain, Alexander Weber, Lena Jurkschat, Hammam Abdelwahab, Chelsea John, Pedro Ortiz~Suarez, Malte Ostendorff, Samuel Weinbach, Rafet Sifa, Stefan Kesselheim, and Nicolas Flores-Herr. 2024.
\newblock \href {https://doi.org/10.18653/v1/2024.findings-naacl.247} {Tokenizer choice for {LLM} training: {N}egligible or crucial?}
\newblock In \emph{Findings of the Association for Computational Linguistics: NAACL 2024}, pages 3907--3924, Mexico City, Mexico. Association for Computational Linguistics.

\bibitem[{Angrist and Pischke(2015)}]{angrist-pischke-2015-mastering}
Joshua Angrist and J\"{o}rn-Steffen Pischke. 2015.
\newblock \href {https://www.masteringmetrics.com} {\emph{Mastering 'Metrics: The Path from Cause to Effect}}.
\newblock Princeton University Press.

\bibitem[{Angrist and Pischke(2009)}]{angrist2009mostly}
Joshua~D. Angrist and J{\"o}rn-Steffen Pischke. 2009.
\newblock \href {https://www.dsecoaching.com/pdf/2008%20Angrist%20Pischke%20MostlyHarmlessEconometrics.pdf} {\emph{Mostly harmless econometrics: {A}n empiricist's companion}}.
\newblock Princeton university press.

\bibitem[{Biderman et~al.(2022)Biderman, Bicheno, and Gao}]{biderman-etal-2022-datasheet}
Stella Biderman, Kieran Bicheno, and Leo Gao. 2022.
\newblock \href {https://doi.org/10.48550/arXiv.2201.07311} {Datasheet for the {{Pile}}}.
\newblock \emph{arXiv preprint 2201.07311}.

\bibitem[{Biderman et~al.(2023)Biderman, Schoelkopf, Anthony, Bradley, O'Brien, Hallahan, Khan, Purohit, Prashanth, Raff, Skowron, Sutawika, and Van Der~Wal}]{biderman-etal-2023-pythia}
Stella Biderman, Hailey Schoelkopf, Quentin Anthony, Herbie Bradley, Kyle O'Brien, Eric Hallahan, Mohammad~Aflah Khan, Shivanshu Purohit, USVSN~Sai Prashanth, Edward Raff, Aviya Skowron, Lintang Sutawika, and Oskar Van Der~Wal. 2023.
\newblock \href {https://proceedings.mlr.press/v202/biderman23a/biderman23a.pdf} {Pythia: {A} suite for analyzing large language models across training and scaling}.
\newblock In \emph{Proceedings of the 40th International Conference on Machine Learning}, ICML'23.

\bibitem[{Cattaneo et~al.(2024)Cattaneo, Idrobo, and Titiunik}]{cattaneo2024extensions}
Matias~D. Cattaneo, Nicolas Idrobo, and Roc\'{\i}o Titiunik. 2024.
\newblock \href {https://doi.org/10.1017/9781009441896} {\emph{A Practical Introduction to Regression Discontinuity Designs: {E}xtensions}}.
\newblock Cambridge University Press.

\bibitem[{Chai et~al.(2024)Chai, Fang, Peng, and Li}]{chai-etal-2024-tokenization}
Yekun Chai, Yewei Fang, Qiwei Peng, and Xuhong Li. 2024.
\newblock \href {https://doi.org/10.18653/v1/2024.findings-emnlp.86} {Tokenization falling short: On subword robustness in large language models}.
\newblock In \emph{Findings of the Association for Computational Linguistics: EMNLP 2024}, pages 1582--1599, Miami, Florida, USA. Association for Computational Linguistics.

\bibitem[{Chang and Bergen(2022)}]{chang-bergen-2022-word}
Tyler~A. Chang and Benjamin~K. Bergen. 2022.
\newblock \href {https://doi.org/10.1162/tacl_a_00444} {Word acquisition in neural language models}.
\newblock \emph{Transactions of the Association for Computational Linguistics}, 10:1--16.

\bibitem[{Cole and Frangakis(2009)}]{cole-and-frangakis-2009-consistency}
Stephen~R. Cole and Constantine~E. Frangakis. 2009.
\newblock \href {https://journals.lww.com/epidem/fulltext/2009/01000/the\%5Fconsistency\%5Fstatement\%5Fin\%5Fcausal\%5Finference\%5F\%5Fa.3.aspx} {The consistency statement in causal inference: {A} definition or an assumption?}
\newblock \emph{Epidemiology}, 20(1).

\bibitem[{Du et~al.(2023)Du, Torroba~Hennigen, Pimentel, Meister, Eisner, and Cotterell}]{du-etal-2023-measure}
Li~Du, Lucas Torroba~Hennigen, Tiago Pimentel, Clara Meister, Jason Eisner, and Ryan Cotterell. 2023.
\newblock \href {https://doi.org/10.18653/v1/2023.acl-long.543} {A measure-theoretic characterization of tight language models}.
\newblock In \emph{Proceedings of the 61st Annual Meeting of the Association for Computational Linguistics (Volume 1: Long Papers)}, pages 9744--9770, Toronto, Canada. Association for Computational Linguistics.

\bibitem[{Gage(1994)}]{Gage1994}
Philip Gage. 1994.
\newblock \href {https://dl.acm.org/doi/10.5555/177910.177914} {A new algorithm for data compression}.
\newblock \emph{C Users Journal}, 12(2):23–38.

\bibitem[{Gao et~al.(2020)Gao, Biderman, Black, Golding, Hoppe, Foster, Phang, He, Thite, Nabeshima, Presser, and Leahy}]{gao-etal-2020-pile}
Leo Gao, Stella Biderman, Sid Black, Laurence Golding, Travis Hoppe, Charles Foster, Jason Phang, Horace He, Anish Thite, Noa Nabeshima, Shawn Presser, and Connor Leahy. 2020.
\newblock \href {http://arxiv.org/abs/2101.00027} {The {{Pile}}: {{An 800GB}} dataset of diverse text for language modeling}.
\newblock \emph{arXiv preprint 2101.00027}.

\bibitem[{Gowda and May(2020)}]{gowda-may-2020-finding}
Thamme Gowda and Jonathan May. 2020.
\newblock \href {https://doi.org/10.18653/v1/2020.findings-emnlp.352} {Finding the optimal vocabulary size for neural machine translation}.
\newblock In \emph{Findings of the Association for Computational Linguistics: EMNLP 2020}, pages 3955--3964, Online. Association for Computational Linguistics.

\bibitem[{Groeneveld et~al.(2024)Groeneveld, Beltagy, Walsh, Bhagia, Kinney, Tafjord, Jha, Ivison, Magnusson, Wang, Arora, Atkinson, Authur, Chandu, Cohan, Dumas, Elazar, Gu, Hessel, Khot, Merrill, Morrison, Muennighoff, Naik, Nam, Peters, Pyatkin, Ravichander, Schwenk, Shah, Smith, Strubell, Subramani, Wortsman, Dasigi, Lambert, Richardson, Zettlemoyer, Dodge, Lo, Soldaini, Smith, and Hajishirzi}]{groeneveld-etal-2024-olmo}
Dirk Groeneveld, Iz~Beltagy, Evan Walsh, Akshita Bhagia, Rodney Kinney, Oyvind Tafjord, Ananya Jha, Hamish Ivison, Ian Magnusson, Yizhong Wang, Shane Arora, David Atkinson, Russell Authur, Khyathi Chandu, Arman Cohan, Jennifer Dumas, Yanai Elazar, Yuling Gu, Jack Hessel, Tushar Khot, William Merrill, Jacob Morrison, Niklas Muennighoff, Aakanksha Naik, Crystal Nam, Matthew Peters, Valentina Pyatkin, Abhilasha Ravichander, Dustin Schwenk, Saurabh Shah, William Smith, Emma Strubell, Nishant Subramani, Mitchell Wortsman, Pradeep Dasigi, Nathan Lambert, Kyle Richardson, Luke Zettlemoyer, Jesse Dodge, Kyle Lo, Luca Soldaini, Noah Smith, and Hannaneh Hajishirzi. 2024.
\newblock \href {https://doi.org/10.18653/v1/2024.acl-long.841} {{OLM}o: {A}ccelerating the science of language models}.
\newblock In \emph{Proceedings of the 62nd Annual Meeting of the Association for Computational Linguistics (Volume 1: Long Papers)}, pages 15789--15809, Bangkok, Thailand. Association for Computational Linguistics.

\bibitem[{Hahn et~al.(2001)Hahn, Todd, and der Klaauw}]{hahn-etal-2001-identification}
Jinyong Hahn, Petra Todd, and Wilbert~Van der Klaauw. 2001.
\newblock \href {http://www.jstor.org/stable/2692190} {Identification and estimation of treatment effects with a regression-discontinuity design}.
\newblock \emph{Econometrica}, 69(1):201--209.

\bibitem[{Hu et~al.(2024)Hu, Tu, Han, Cui, He, Zhao, Long, Zheng, Fang, Huang, Zhang, Thai, Wang, Yao, Zhao, Zhou, Cai, Zhai, Ding, Jia, Zeng, dahai li, Liu, and Sun}]{hu2024minicpm}
Shengding Hu, Yuge Tu, Xu~Han, Ganqu Cui, Chaoqun He, Weilin Zhao, Xiang Long, Zhi Zheng, Yewei Fang, Yuxiang Huang, Xinrong Zhang, Zhen~Leng Thai, Chongyi Wang, Yuan Yao, Chenyang Zhao, Jie Zhou, Jie Cai, Zhongwu Zhai, Ning Ding, Chao Jia, Guoyang Zeng, dahai li, Zhiyuan Liu, and Maosong Sun. 2024.
\newblock \href {https://openreview.net/forum?id=3X2L2TFr0f} {Mini{CPM}: {U}nveiling the potential of small language models with scalable training strategies}.
\newblock In \emph{First Conference on Language Modeling}.

\bibitem[{Kaddour(2023)}]{kaddour2023minipile}
Jean Kaddour. 2023.
\newblock \href {https://arxiv.org/abs/2304.08442} {The {M}ini{P}ile challenge for data-efficient language models}.
\newblock \emph{arXiv preprint 2304.08442}.

\bibitem[{Kingma and Ba(2015)}]{adam}
Diederik~P. Kingma and Jimmy Ba. 2015.
\newblock \href {http://arxiv.org/abs/1412.6980} {Adam: {A} method for stochastic optimization}.
\newblock In \emph{3rd International Conference on Learning Representations}.

\bibitem[{Kozma and Voderholzer(2024)}]{kozma-etal-2024-theoretical}
L\'{a}szl\'{o} Kozma and Johannes Voderholzer. 2024.
\newblock \href {https://arxiv.org/abs/2411.08671} {Theoretical analysis of byte-pair encoding}.
\newblock \emph{arXiv preprint 2411.08671}.

\bibitem[{Loshchilov and Hutter(2019)}]{loshchilov2018decoupled}
Ilya Loshchilov and Frank Hutter. 2019.
\newblock \href {https://openreview.net/forum?id=Bkg6RiCqY7} {Decoupled weight decay regularization}.
\newblock In \emph{International Conference on Learning Representations}.

\bibitem[{Lu et~al.(2018)Lu, Sadiq, Feaster, and Ishwaran}]{lu-etal-2018-estimating}
Min Lu, Saad Sadiq, Daniel~J Feaster, and Hemant Ishwaran. 2018.
\newblock \href {https://doi.org/10.1080/10618600.2017.1356325} {Estimating individual treatment effect in observational data using random forest methods}.
\newblock \emph{Journal of Computational and Graphical Statistics}, 27(1):209--219.

\bibitem[{McKenzie et~al.(2023)McKenzie, Lyzhov, Pieler, Parrish, Mueller, Prabhu, McLean, Shen, Cavanagh, Gritsevskiy, Kauffman, Kirtland, Zhou, Zhang, Huang, Wurgaft, Weiss, Ross, Recchia, Liu, Liu, Tseng, Korbak, Kim, Bowman, and Perez}]{mckenzie2023inverse}
Ian~R. McKenzie, Alexander Lyzhov, Michael~Martin Pieler, Alicia Parrish, Aaron Mueller, Ameya Prabhu, Euan McLean, Xudong Shen, Joe Cavanagh, Andrew~George Gritsevskiy, Derik Kauffman, Aaron~T. Kirtland, Zhengping Zhou, Yuhui Zhang, Sicong Huang, Daniel Wurgaft, Max Weiss, Alexis Ross, Gabriel Recchia, Alisa Liu, Jiacheng Liu, Tom Tseng, Tomasz Korbak, Najoung Kim, Samuel~R. Bowman, and Ethan Perez. 2023.
\newblock \href {https://openreview.net/forum?id=DwgRm72GQF} {Inverse scaling: {When} bigger isn't better}.
\newblock \emph{Transactions on Machine Learning Research}.
\newblock Featured Certification.

\bibitem[{Murray and Chiang(2018)}]{murray-chiang-2018-correcting}
Kenton Murray and David Chiang. 2018.
\newblock \href {https://doi.org/10.18653/v1/W18-6322} {Correcting length bias in neural machine translation}.
\newblock In \emph{Proceedings of the Third Conference on Machine Translation: Research Papers}, pages 212--223, Brussels, Belgium. Association for Computational Linguistics.

\bibitem[{Paszke et~al.(2019)Paszke, Gross, Massa, Lerer, Bradbury, Chanan, Killeen, Lin, Gimelshein, Antiga, Desmaison, Kopf, Yang, DeVito, Raison, Tejani, Chilamkurthy, Steiner, Fang, Bai, and Chintala}]{paszke-etal-2019-pytorch}
Adam Paszke, Sam Gross, Francisco Massa, Adam Lerer, James Bradbury, Gregory Chanan, Trevor Killeen, Zeming Lin, Natalia Gimelshein, Luca Antiga, Alban Desmaison, Andreas Kopf, Edward Yang, Zachary DeVito, Martin Raison, Alykhan Tejani, Sasank Chilamkurthy, Benoit Steiner, Lu~Fang, Junjie Bai, and Soumith Chintala. 2019.
\newblock \href {https://proceedings.neurips.cc/paper\%5Ffiles/paper/2019/hash/bdbca288fee7f92f2bfa9f7012727740-Abstract.html} {{{PyTorch}}: {A}n imperative style, high-performance deep learning library}.
\newblock In \emph{Advances in {{Neural Information Processing Systems}}}, volume~32. {Curran Associates, Inc.}

\bibitem[{Penedo et~al.(2024)Penedo, Kydl{\'\i}{\v{c}}ek, allal, Lozhkov, Mitchell, Raffel, Werra, and Wolf}]{penedo2024the}
Guilherme Penedo, Hynek Kydl{\'\i}{\v{c}}ek, Loubna~Ben allal, Anton Lozhkov, Margaret Mitchell, Colin Raffel, Leandro~Von Werra, and Thomas Wolf. 2024.
\newblock \href {https://openreview.net/forum?id=n6SCkn2QaG} {The {FineWeb} datasets: {D}ecanting the web for the finest text data at scale}.
\newblock In \emph{The Thirty-eight Conference on Neural Information Processing Systems Datasets and Benchmarks Track}.

\bibitem[{Petrov et~al.(2023)Petrov, La~Malfa, H.~S.~Torr, and Bibi}]{petrov2023token_unfairness}
Aleksandar Petrov, Emanuele La~Malfa, Philip H.~S.~Torr, and Adel Bibi. 2023.
\newblock \href {https://arxiv.org/abs/2305.15425} {Language model tokenizers introduce unfairness between languages}.
\newblock In \emph{Advances in Neural Information Processing Systems}.

\bibitem[{Phan et~al.(2025)Phan, Amos, Gat, Havasi, Muckley, and Ullrich}]{phan-etal-2025-exact}
Buu Phan, Brandon Amos, Itai Gat, Marton Havasi, Matthew~J. Muckley, and Karen Ullrich. 2025.
\newblock \href {https://openreview.net/forum?id=zGej22CBnS} {Exact byte-level probabilities from tokenized language models for {FIM}-tasks and model ensembles}.
\newblock In \emph{The Thirteenth International Conference on Learning Representations}.

\bibitem[{Phan et~al.(2024)Phan, Havasi, Muckley, and Ullrich}]{phan-etal-2024-understanding}
Buu Phan, Marton Havasi, Matthew~J. Muckley, and Karen Ullrich. 2024.
\newblock \href {https://openreview.net/forum?id=OqfdrBj1y1} {Understanding and mitigating tokenization bias in language models}.
\newblock In \emph{ICML 2024 Workshop on Theoretical Foundations of Foundation Models}.

\bibitem[{Pimentel and Meister(2024)}]{pimentel-meister-2024-compute}
Tiago Pimentel and Clara Meister. 2024.
\newblock \href {https://doi.org/10.18653/v1/2024.emnlp-main.1020} {How to compute the probability of a word}.
\newblock In \emph{Proceedings of the 2024 Conference on Empirical Methods in Natural Language Processing}, pages 18358--18375, Miami, Florida, USA. Association for Computational Linguistics.

\bibitem[{Rajaraman et~al.(2024)Rajaraman, Jiao, and Ramchandran}]{Rajaraman_2024}
Nived Rajaraman, Jiantao Jiao, and Kannan Ramchandran. 2024.
\newblock \href {https://proceedings.neurips.cc/paper\%5Ffiles/paper/2024/file/724afcaae4ae92a9220a077ffe80088d-Paper-Conference.pdf} {An analysis of tokenization: Transformers under {M}arkov data}.
\newblock In \emph{Advances in Neural Information Processing Systems}, volume~37, pages 62503--62556. Curran Associates, Inc.

\bibitem[{Rubin(1974)}]{rubin-1974-estimating}
Donald~B. Rubin. 1974.
\newblock \href {https://psycnet.apa.org/record/1975-06502-001?doi=1} {Estimating causal effects of treatments in randomized and nonrandomized studies}.
\newblock \emph{Journal of Educational Psychology}, 66(5):688--701.

\bibitem[{Rubin(2005)}]{rubin-2005-causal}
Donald~B. Rubin. 2005.
\newblock \href {https://doi.org/10.1198/016214504000001880} {Causal inference using potential outcomes}.
\newblock \emph{Journal of the American Statistical Association}, 100(469):322--331.

\bibitem[{Rust et~al.(2021)Rust, Pfeiffer, Vuli{\'c}, Ruder, and Gurevych}]{rust-etal-2021-good}
Phillip Rust, Jonas Pfeiffer, Ivan Vuli{\'c}, Sebastian Ruder, and Iryna Gurevych. 2021.
\newblock \href {https://doi.org/10.18653/v1/2021.acl-long.243} {How good is your tokenizer? {O}n the monolingual performance of multilingual language models}.
\newblock In \emph{Proceedings of the 59th Annual Meeting of the Association for Computational Linguistics and the 11th International Joint Conference on Natural Language Processing (Volume 1: Long Papers)}, pages 3118--3135, Online. Association for Computational Linguistics.

\bibitem[{Sch{\"a}fer et~al.(2024)Sch{\"a}fer, Hofmann, Schlag, and Pimentel}]{schafer-etal-2024-effect}
Anton Sch{\"a}fer, Thomas Hofmann, Imanol Schlag, and Tiago Pimentel. 2024.
\newblock \href {https://doi.org/10.18653/v1/2024.findings-acl.571} {On the effect of (near) duplicate subwords in language modelling}.
\newblock In \emph{Findings of the Association for Computational Linguistics: ACL 2024}, pages 9580--9597, Bangkok, Thailand. Association for Computational Linguistics.

\bibitem[{Schmidt et~al.(2024)Schmidt, Reddy, Zhang, Alameddine, Uzan, Pinter, and Tanner}]{schmidt-etal-2024-tokenization}
Craig~W. Schmidt, Varshini Reddy, Haoran Zhang, Alec Alameddine, Omri Uzan, Yuval Pinter, and Chris Tanner. 2024.
\newblock \href {https://doi.org/10.18653/v1/2024.emnlp-main.40} {Tokenization is more than compression}.
\newblock In \emph{Proceedings of the 2024 Conference on Empirical Methods in Natural Language Processing}, pages 678--702, Miami, Florida, USA. Association for Computational Linguistics.

\bibitem[{Schuster and Nakajima(2012)}]{schuster-nakajima-2012-voice}
Mike Schuster and Kaisuke Nakajima. 2012.
\newblock \href {https://doi.org/10.1109/ICASSP.2012.6289079} {{J}apanese and {K}orean voice search}.
\newblock In \emph{2012 IEEE International Conference on Acoustics, Speech and Signal Processing (ICASSP)}, pages 5149--5152.

\bibitem[{Sennrich et~al.(2016)Sennrich, Haddow, and Birch}]{sennrich-etal-2016-neural}
Rico Sennrich, Barry Haddow, and Alexandra Birch. 2016.
\newblock \href {https://doi.org/10.18653/v1/P16-1162} {Neural machine translation of rare words with subword units}.
\newblock In \emph{Proceedings of the 54th Annual Meeting of the Association for Computational Linguistics (Volume 1: Long Papers)}, pages 1715--1725, Berlin, Germany. Association for Computational Linguistics.

\bibitem[{Stahlberg and Byrne(2019)}]{stahlberg-byrne-2019-nmt}
Felix Stahlberg and Bill Byrne. 2019.
\newblock \href {https://doi.org/10.18653/v1/D19-1331} {On {NMT} search errors and model errors: Cat got your tongue?}
\newblock In \emph{Proceedings of the 2019 Conference on Empirical Methods in Natural Language Processing and the 9th International Joint Conference on Natural Language Processing (EMNLP-IJCNLP)}, pages 3356--3362, Hong Kong, China. Association for Computational Linguistics.

\bibitem[{Thistlewaite and Campbell(1960)}]{thistlewaite1960regression}
Donald~L. Thistlewaite and Donald~T. Campbell. 1960.
\newblock \href {https://doi.org/10.1037/h0044319} {Regression-discontinuity analysis: {A}n alternative to the ex-post facto experiment}.
\newblock \emph{Journal of Educational Psychology}, 51(6):309--317.

\bibitem[{Toraman et~al.(2023)Toraman, Yilmaz, \c{S}ahi\.nu\c{c}, and Ozcelik}]{Toraman_2023}
Cagri Toraman, Eyup~Halit Yilmaz, Furkan \c{S}ahi\.nu\c{c}, and Oguzhan Ozcelik. 2023.
\newblock \href {https://doi.org/10.1145/3578707} {Impact of tokenization on language models: {A}n analysis for {T}urkish}.
\newblock \emph{ACM Trans. Asian Low-Resour. Lang. Inf. Process.}, 22(4).

\bibitem[{Touvron et~al.(2023)Touvron, Martin, Stone, Albert, Almahairi, Babaei, Bashlykov, Batra, Bhargava, Bhosale, Bikel, Blecher, Ferrer, Chen, Cucurull, Esiobu, Fernandes, Fu, Fu, Fuller, Gao, Goswami, Goyal, Hartshorn, Hosseini, Hou, Inan, Kardas, Kerkez, Khabsa, Kloumann, Korenev, Koura, Lachaux, Lavril, Lee, Liskovich, Lu, Mao, Martinet, Mihaylov, Mishra, Molybog, Nie, Poulton, Reizenstein, Rungta, Saladi, Schelten, Silva, Smith, Subramanian, Tan, Tang, Taylor, Williams, Kuan, Xu, Yan, Zarov, Zhang, Fan, Kambadur, Narang, Rodriguez, Stojnic, Edunov, and Scialom}]{touvron2023llama}
Hugo Touvron, Louis Martin, Kevin Stone, Peter Albert, Amjad Almahairi, Yasmine Babaei, Nikolay Bashlykov, Soumya Batra, Prajjwal Bhargava, Shruti Bhosale, Dan Bikel, Lukas Blecher, Cristian~Canton Ferrer, Moya Chen, Guillem Cucurull, David Esiobu, Jude Fernandes, Jeremy Fu, Wenyin Fu, Brian Fuller, Cynthia Gao, Vedanuj Goswami, Naman Goyal, Anthony Hartshorn, Saghar Hosseini, Rui Hou, Hakan Inan, Marcin Kardas, Viktor Kerkez, Madian Khabsa, Isabel Kloumann, Artem Korenev, Punit~Singh Koura, Marie-Anne Lachaux, Thibaut Lavril, Jenya Lee, Diana Liskovich, Yinghai Lu, Yuning Mao, Xavier Martinet, Todor Mihaylov, Pushkar Mishra, Igor Molybog, Yixin Nie, Andrew Poulton, Jeremy Reizenstein, Rashi Rungta, Kalyan Saladi, Alan Schelten, Ruan Silva, Eric~Michael Smith, Ranjan Subramanian, Xiaoqing~Ellen Tan, Binh Tang, Ross Taylor, Adina Williams, Jian~Xiang Kuan, Puxin Xu, Zheng Yan, Iliyan Zarov, Yuchen Zhang, Angela Fan, Melanie Kambadur, Sharan Narang, Aurelien Rodriguez, Robert Stojnic, Sergey Edunov, and Thomas
  Scialom. 2023.
\newblock \href {https://arxiv.org/abs/2307.09288} {Llama 2: {O}pen foundation and fine-tuned chat models}.
\newblock \emph{arXiv preprint 2307.09288}.

\bibitem[{Vieira et~al.(2024)Vieira, LeBrun, Giulianelli, Gastaldi, DuSell, Terilla, O'Donnell, and Cotterell}]{vieira-etal-2024-language}
Tim Vieira, Ben LeBrun, Mario Giulianelli, Juan~Luis Gastaldi, Brian DuSell, John Terilla, Timothy~J. O'Donnell, and Ryan Cotterell. 2024.
\newblock \href {https://arxiv.org/abs/2412.03719} {From language models over tokens to language models over characters}.
\newblock \emph{arXiv preprint 2412.03719}.

\bibitem[{Wang et~al.(2025)Wang, Li, Jiang, Ding, Luo, Jiang, Liang, and Yang}]{wang2025tokenizationmattersdegradinglarge}
Dixuan Wang, Yanda Li, Junyuan Jiang, Zepeng Ding, Ziqin Luo, Guochao Jiang, Jiaqing Liang, and Deqing Yang. 2025.
\newblock \href {https://arxiv.org/abs/2405.17067} {Tokenization matters! {D}egrading large language models through challenging their tokenization}.
\newblock \emph{arXiv preprint 2405.17067}.

\bibitem[{Wen et~al.(2025)Wen, Li, Wang, Hall, Liang, and Ma}]{wen2024understanding}
Kaiyue Wen, Zhiyuan Li, Jason~S. Wang, David Leo~Wright Hall, Percy Liang, and Tengyu Ma. 2025.
\newblock \href {https://openreview.net/forum?id=m51BgoqvbP} {Understanding warmup-stable-decay learning rates: {A} river valley loss landscape view}.
\newblock In \emph{The Thirteenth International Conference on Learning Representations}.

\bibitem[{Whittington et~al.(2025)Whittington, Bachmann, and Pimentel}]{whittington2024tokenisation}
Philip Whittington, Gregor Bachmann, and Tiago Pimentel. 2025.
\newblock \href {https://arxiv.org/abs/2412.15210} {Tokenisation is {NP}-complete}.
\newblock In \emph{Proceedings of the 63rd Annual Meeting of the Association for Computational Linguistics (Volume 1: Long Papers)}. Association for Computational Linguistics.

\bibitem[{Wolf et~al.(2020)Wolf, Debut, Sanh, Chaumond, Delangue, Moi, Cistac, Rault, Louf, Funtowicz, Davison, Shleifer, {von Platen}, Ma, Jernite, Plu, Xu, Le~Scao, Gugger, Drame, Lhoest, and Rush}]{wolf-etal-2020-transformers}
Thomas Wolf, Lysandre Debut, Victor Sanh, Julien Chaumond, Clement Delangue, Anthony Moi, Pierric Cistac, Tim Rault, Remi Louf, Morgan Funtowicz, Joe Davison, Sam Shleifer, Patrick {von Platen}, Clara Ma, Yacine Jernite, Julien Plu, Canwen Xu, Teven Le~Scao, Sylvain Gugger, Mariama Drame, Quentin Lhoest, and Alexander Rush. 2020.
\newblock \href {https://doi.org/10.18653/v1/2020.emnlp-demos.6} {Transformers: {S}tate-of-the-art natural language processing}.
\newblock In \emph{Proceedings of the 2020 {{Conference}} on {{Empirical Methods}} in {{Natural Language Processing}}: {{System Demonstrations}}}, pages 38--45, {Online}. {Association for Computational Linguistics}.

\bibitem[{Zhai et~al.(2022)Zhai, Kolesnikov, Houlsby, and Beyer}]{zhai2022scaling}
Xiaohua Zhai, Alexander Kolesnikov, Neil Houlsby, and Lucas Beyer. 2022.
\newblock \href {https://openaccess.thecvf.com/content/CVPR2022/html/Zhai\%5FScaling\%5FVision\%5FTransformers\%5FCVPR\%5F2022\%5Fpaper.html} {Scaling vision transformers}.
\newblock In \emph{Proceedings of the IEEE/CVF conference on computer vision and pattern recognition}, pages 12104--12113.

\bibitem[{Zouhar et~al.(2023)Zouhar, Meister, Gastaldi, Du, Sachan, and Cotterell}]{zouhar-etal-2023-tokenization}
Vil{\'e}m Zouhar, Clara Meister, Juan Gastaldi, Li~Du, Mrinmaya Sachan, and Ryan Cotterell. 2023.
\newblock \href {https://doi.org/10.18653/v1/2023.acl-long.284} {Tokenization and the noiseless channel}.
\newblock In \emph{Proceedings of the 61st Annual Meeting of the Association for Computational Linguistics (Volume 1: Long Papers)}, pages 5184--5207, Toronto, Canada. Association for Computational Linguistics.

\end{thebibliography}
\clearpage

\appendix

\section{Definitions of Tokenisation Functions}
\label{app:tok_functions}

\paragraph{Merge-based Tokenisation Function.} 
The most common tokenisation function to date---used by, e.g., byte pair encoding (\bpe; \citealp{Gage1994,sennrich-etal-2016-neural})---builds subword-strings by merging a string's symbols two-at-a-time in a fixed pre-determined order.
The building blocks of this function are merges. 
A \textmergecolor{\defn{merge}} $\merge \in \vocab \times \vocab$ represents a pair of subwords; it can thus be written as: $\merge = \mergestring{\mergepair[1]}{\mergepair[2]}$. 
Given a list of merges $\merges \!=\! [\merge_1,\! \merge_2, \smalldots, \merge_{\size{\merges}}]$, this function is defined as:
\begin{align} \label{eq:bup_tok_function}
    \tokbup(\chs) \defeq 
    \mergefunc{\merge_{\size{\merges}}}\!(\cdots (\mergefunc{\merge_{1}}\!(\chs)))
\end{align}
where $\mergefunc{\merge}\colon \vocab^* \to \vocab^*$ is a function which scans a string left-to-right, replacing any appearance of $\mergepair[1]$ followed by $\mergepair[2]$ with another subword $\mergepair[\merge]$ which represents their concatenation $\mergepair[\merge] = \mergepair[1]\circ \mergepair[2]$.
E.g., $\mergefunc{\mergestring{he\!}{\!llo}}(\subwordstring{he,llo}) = \subwordstring{hello}$. 

\paragraph{Longest Prefix-match Tokenisation Function.}
A common alternative to \cref{eq:bup_tok_function}---used by, e.g., WordPiece (\wordpiece; \citealp{schuster-nakajima-2012-voice})---is 
to select the longest subword $\subword\in\vocab$ matching a prefix of $\chs$, and then recursively tokenising the remaining string.
This function is defined as:%
\begin{align}\label{eq:tok_func_long}
    \toklongest(\chs) \defeq {\color{\subwordcolor}\Big\langle} & \substack{\argmax_{\subword \in \vocab} \size{\subword} \\ \mathrm{s.t.}\, \chs[1:\size{\subword}] = \subword }, \toklongest(\chs[\size{\subword}:\size{\chs}]) {\color{\subwordcolor}\Big\rangle} 
\end{align}

\section{Proofs}

We provide the proofs for \cref{thm:no_effect,thm:initialisation_effect} here.

\subsection{Proof of \texorpdfstring{\Cref{thm:no_effect}}{Theorem 1}}
\label{app:no_effect_proof}

\noeffecttheorem*
\begin{proof}
Note that tokenisation functions $\tok$ are necessarily injective and, thus, each character-string is mapped to a unique subword-string.
For any tokeniser, we can thus rewrite \cref{eq:subword_ch_connection} as:
\begin{align}\label{eq:thm_ass}
    p(\chs) = \ptok(\tok(\chs))
\end{align}
Further, for perfect models $\pthetatok(\subwords) \mathop{=} \ptok(\subwords)$. 
We can thus show that:
\begin{subequations}
\begin{align}
    \pthetatok(\chs) 
    &= \sum_{\subwords \in \vocab^*} \pthetatok(\subwords) \; \one\{\chs = \detok(\subwords)\} \\
    &= \sum_{\subwords \in \vocab^*} \ptok(\subwords) \; \one\{\chs = \detok(\subwords)\} \\
    &= \ptok(\tok(\chs)) \\
    &= p(\chs)
\end{align}
\end{subequations} 
Under this condition, we can rewrite the potential outcome in \cref{eq:potential_outcome} as:
\begin{subequations}
\begin{align}
    \expectpotentialoutcome[\treatment] 
    &= \expect_{\chs[<t]}\left[
    \log \pthetatok(\chssubword \mid \chs[<t])
    \right] \\
    &= \expect_{\chs[<t]}\left[
        \log \frac{\pthetatok(\chs[<t] \circ \chssubword \circ \alphabet^*)}{\pthetatok(\chs[<t] \circ \alphabet^*)}   
    \right]\\
   &= \expect_{\chs[<t]}\left[
        \log \frac{p(\chs[<t] \circ \chssubword \circ \alphabet^*)}{p(\chs[<t] \circ \alphabet^*)}  
    \right]\\
   &= \expect_{\chs[<t]}\left[
        \log p(\chssubword \mid \chs[<t]) 
    \right] 
\end{align}
\end{subequations}
where we denote as $p(\chs \circ \alphabet^*)$ the sum of the probability assigned to all character-strings starting with $\chs$.
As the potential outcomes are a function of $p$, which does not depend on the used tokeniser, the values of $\expectpotentialoutcome[0]$ and $\expectpotentialoutcome[1]$ will be the same, and the causal effect $\atemerge$ will thus be 0.
\end{proof}

\subsection{Proof of \texorpdfstring{\Cref{thm:initialisation_effect}}{Theorem 2}}
\label{app:initialisation_effect}

{\renewcommand\footnote[1]{}\initialisationeffecttheorem*}
\begin{proof}
For treatment assignment 1, we have $\subword \in \tokeniser$ and $\tok(\chssubword) = \subword$;
for any context, thus, the model will approximately assign this character-string probability $\pthetatok(\subword \mid \subwords_{<t}) \mathop{=} \frac{1}{|\vocab|}$.\footnote{The character-string probabilities will only be approximately $\frac{1}{|\vocab|}$ since other subword-strings might also map to $\chssubword$. However, these alternative subword-strings will be longer, and thus have exponentially less probability mass. Once subword-string probabilities are marginalised out, they will thus make little difference to results.}
For treatment assignment 0, we have $\subword \notin \tokeniser$ and $\tok(\chssubword) = \subwordstring{\mergepair[1], \mergepair[2], ...}$, i.e.,  this subword now gets tokenised into multiple subwords instead;
for any context, thus, the model will approximately assign $\chssubword$ probability
\begin{equation}
\prod_{i=1}^{|\tok(\chssubword)|}\pthetatok(\mergepair[i] \!\mid \!\subwords_{<t} \!\circ\! \tok(\chssubword)_{<i}) \mathop{=} \frac{1}{|\vocab|}^{|\tok(\chssubword)|}
\end{equation}
For treatment assignment 0, we have $|\tok(\chssubword)| \geq 2$. The $\late$ is thus approximately lower-bounded by: $\log \frac{1}{|\vocab|} \!-\! \log \frac{1}{|\vocab|}^2 \!=\! \log |\vocab|$.
\end{proof}

\section{Implementation Details}
\label{app:implementation_details}

We implement all experiments using PyTorch \citep{paszke-etal-2019-pytorch} and implement variants of the Llama architecture using components implemented in the \myemph{transformers} library \citep{wolf-etal-2020-transformers}.

\paragraph{Data.}
We use the MiniPile dataset \citep{kaddour2023minipile}, a curated \q{6}{\gb} subset of the deduplicated Pile corpus \citep{gao-etal-2020-pile, biderman-etal-2022-datasheet}. The training set consists of \q{1}{\million} documents, which we use to train both tokenisers and models. Additionally, we train two \q{100}{\million}-parameter models on \q{20}{\billion} tokens from \fineweb \citep{penedo2024the}. 
For evaluation, we use the \q{10}{\thousand}-document validation set to collect the subwords' in-context log-probabilities $\pthetatok(\chssubword \mid \chs_{<t})$.

\paragraph{Tokenisers.}
Our method requires knowing which subwords would have been added if we allowed for a larger vocabulary.
To identify which subwords would be added with a larger vocabulary, we construct a tokeniser with a vocabulary size of \q{320}{\thousand}, then truncate it to define smaller tokenisers while retaining the full ranked list of subwords. 
Tokenisers are built using the \myemph{tokenisers} library,\footnote{\href{https://github.com/huggingface/tokenisers}{\myemph{github.com/huggingface/tokenisers}}.} and we encourage others to report similar details to support reproducibility.
We evaluate vocabularies of sizes \integer{8024}, \q{32}{\thousand}, and \q{128}{\thousand}, which allows us to study how tokenisation bias varies with vocabulary size. As the vocabulary grows, the added subwords are naturally of lower frequency.
We compare \bpe and \wordpiece tokenisers, trained identically using byte-level pre-tokenisers, processors, decoders, and a byte-based alphabet.

\paragraph{Model Training.}
We use variants of the LLaMA 2 architecture. Our default model has \q{57}{\million} parameters, with \integer{6} layers, \integer{24} attention heads, a hidden size of \integer{768}, and tied input–output embeddings. Details for other model configurations are available in our repository.\footnote{\href{https://github.com/pietrolesci/tokenisation-bias}{\myemph{github.com/pietrolesci/tokenisation-bias}}.}
Models are trained with AdamW \citep{adam,loshchilov2018decoupled} using learning rate \snum{6e-4}, $\beta_1\mathop{=}0.9$, $\beta_2\mathop{=}0.95$, $\epsilon\mathop{=}\snum{1e-8}$, and weight decay \float{0.1}. We adopt the warmup-stable-decay schedule \citep{zhai2022scaling}, which maintains a flat learning rate after warmup and applies cosine decay only during cooldown---an approach shown to be effective for small LMs \citep{hu2024minicpm, wen2024understanding} and which avoids requiring a fixed compute budget.
Training uses a context size of \integer{2048} tokens, batch size \integer{128}, gradient clipping to \float{1}, and runs for \q{50}{\thousand} steps with checkpoints saved every \q{2}{\thousand} steps. 
All experiments are run with the same seed for consistent data order and initialisation.

\paragraph{Regression Model.}
Given observed outcomes and subword indices, we estimate $\rdestimand$ by fitting the regression:
\begin{align}\label{eq:regression_model}
    \observedoutcome = \alpha + \beta \frac{\ordermerge}{1000} + \rdestimand \treatment_{\subword} + \residual_{\subword}
\end{align}
where $\residual_{\subword}$ is a zero-mean error term. To avoid confounding, we exclude subwords that are nested within larger subwords also present in the vocabulary (e.g., if both \subwordstring{he} and \subwordstring{hello} are included, only the latter is used).\footnote{Following standard causal analysis, we assume SUTVA: treating one unit does not affect others. This assumption is violated by overlapping subwords---i.e., subwords that are themselves part of larger subwords---which we thus exclude.}

\paragraph{Hardware Details.}
We use a server with one \myemph{NVIDIA A100 80GB PCIe}, \integer{32} CPUs, and \integer{32} GB of RAM for all experiments. Below, we report a subset of the output of the \myemph{lscpu} command:

\begin{tcolorbox}[left=5pt,right=5pt,top=5pt,bottom=5pt]
    \small
    \begin{verbatim}
Architecture:        x86_64
CPU op-mode(s):      32-bit, 64-bit
Address sizes:       46 bits physical, 
                     48 bits virtual
Byte Order:          Little Endian
CPU(s):              32
On-line CPU(s) list: 0-31
Vendor ID:           GenuineIntel
Model name:          Intel(R) Xeon(R)
                     Silver 4210R CPU
                     @ 2.40GHz
CPU family:          6
Model:               85
Thread(s) per core:  1
Core(s) per socket:  1
Socket(s):           8
Stepping:            7
BogoMIPS:            4800.11
\end{verbatim}
\end{tcolorbox}

\paragraph{Reproducibility.}
We release all experimental artefacts as a HuggingFace Hub collection.\footnote{\href{https://huggingface.co/collections/pietrolesci/tokenisation-bias-66d5d0b40cb82a2d789b19db}{\myemph{https://huggingface.co/collections/pietrolesci/tokenisation-bias-66d5d0b40cb82a2d789b19db}}.} 
This includes: (i) the raw and tokenised data in model-consumed order; (ii) the full \q{320}{\thousand}-subword tokenisers; (iii) the reduced-vocabulary tokenisers used in our experiments; and (iv) all model checkpoints. We encourage others to adopt similar practices and make tokeniser design choices more transparent.

\section{Additional Plots}
\label{app:addtional_plots}

Additional plots follow on the next pages.

\begin{figure*}[!ht]
    \centering
    \includegraphics[width=\linewidth]{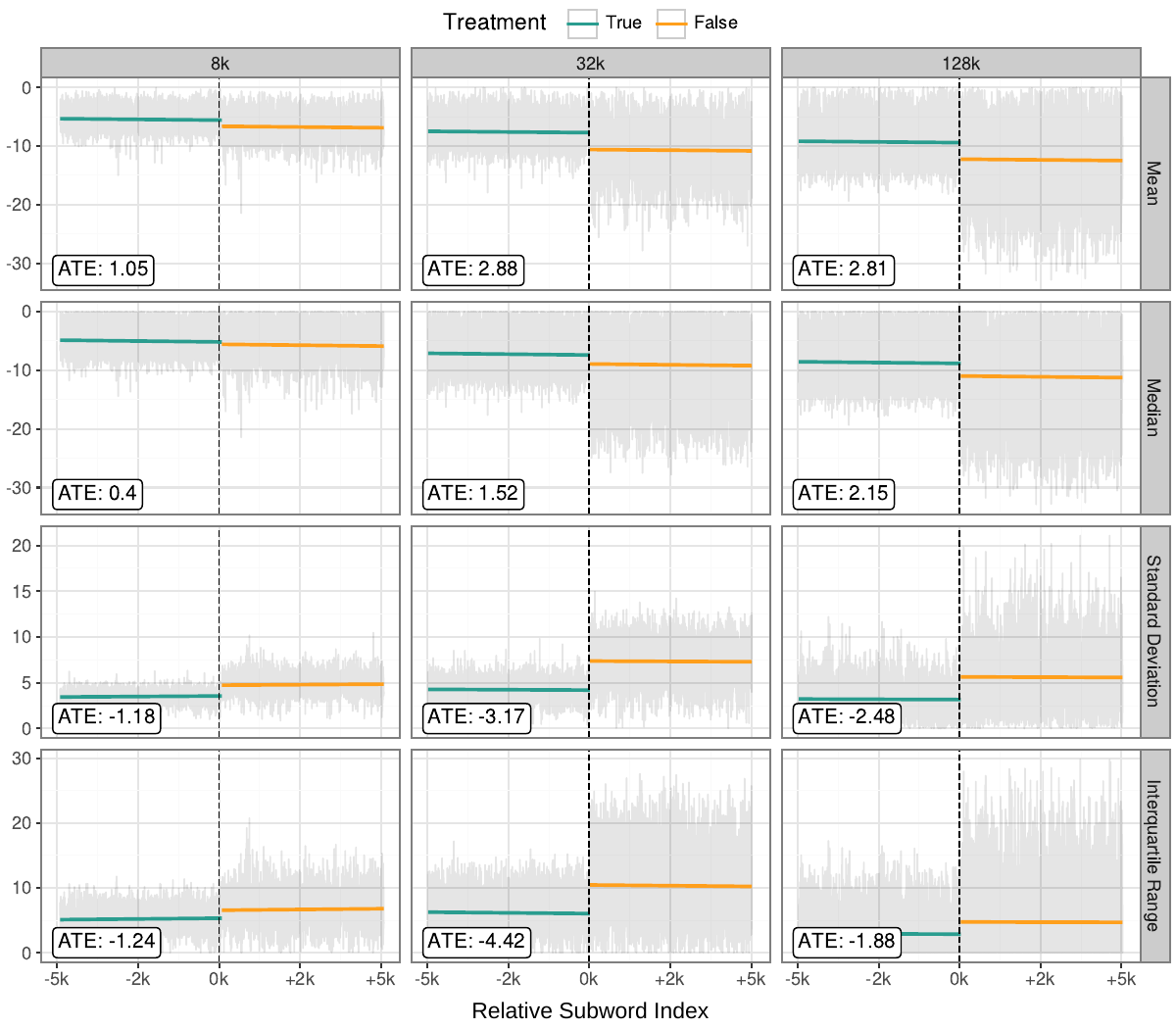}\vspace{-5pt}
    \caption{Average treatment effect for \bpe with $\cutoff \in \{\q{8}{\thousand}, \q{32}{\thousand}, \q{128}{\thousand}\}$ at the last model checkpoints. Each row refers to a different outcome variable: mean, standard deviation, median, and interquartile range of a $\chssubword$’s log-probability across contexts. Subwords on the left-hand side of the cutoff are treated (i.e., added to the vocabulary).\looseness=-1}%
    \label{fig:bpe_vocabsize_metrics}
    \vspace{-8pt}
\end{figure*}

\begin{figure*}[!ht]
    \centering
    \includegraphics[width=\linewidth]{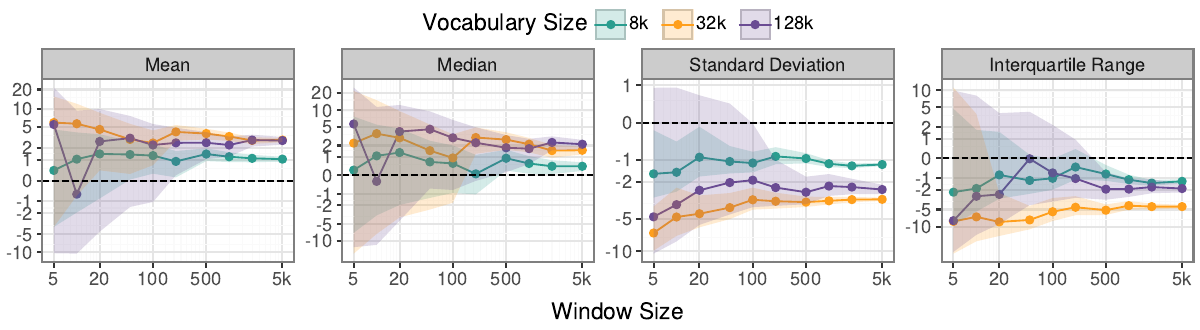}\vspace{-5pt}
    \caption{
    Estimated effect $\rdestimator$ (y-axis) vs.\ window size used to estimate it. Results for fully trained model with \bpe and $\cutoff\in\{\q{8}{\thousand}, \q{32}{\thousand}, \q{128}{\thousand}\}$. Shaded regions correspond to standard errors.
    The columns refer to different outcome variables: mean, standard deviation, median, and interquartile range of a $\chssubword$’s log-probability across contexts.\looseness=-1}
    \label{fig:bpe_windowsizes}
    \vspace{-8pt}
\end{figure*}

\begin{figure*}[!ht]
    \centering\small
    \includegraphics[width=\linewidth]{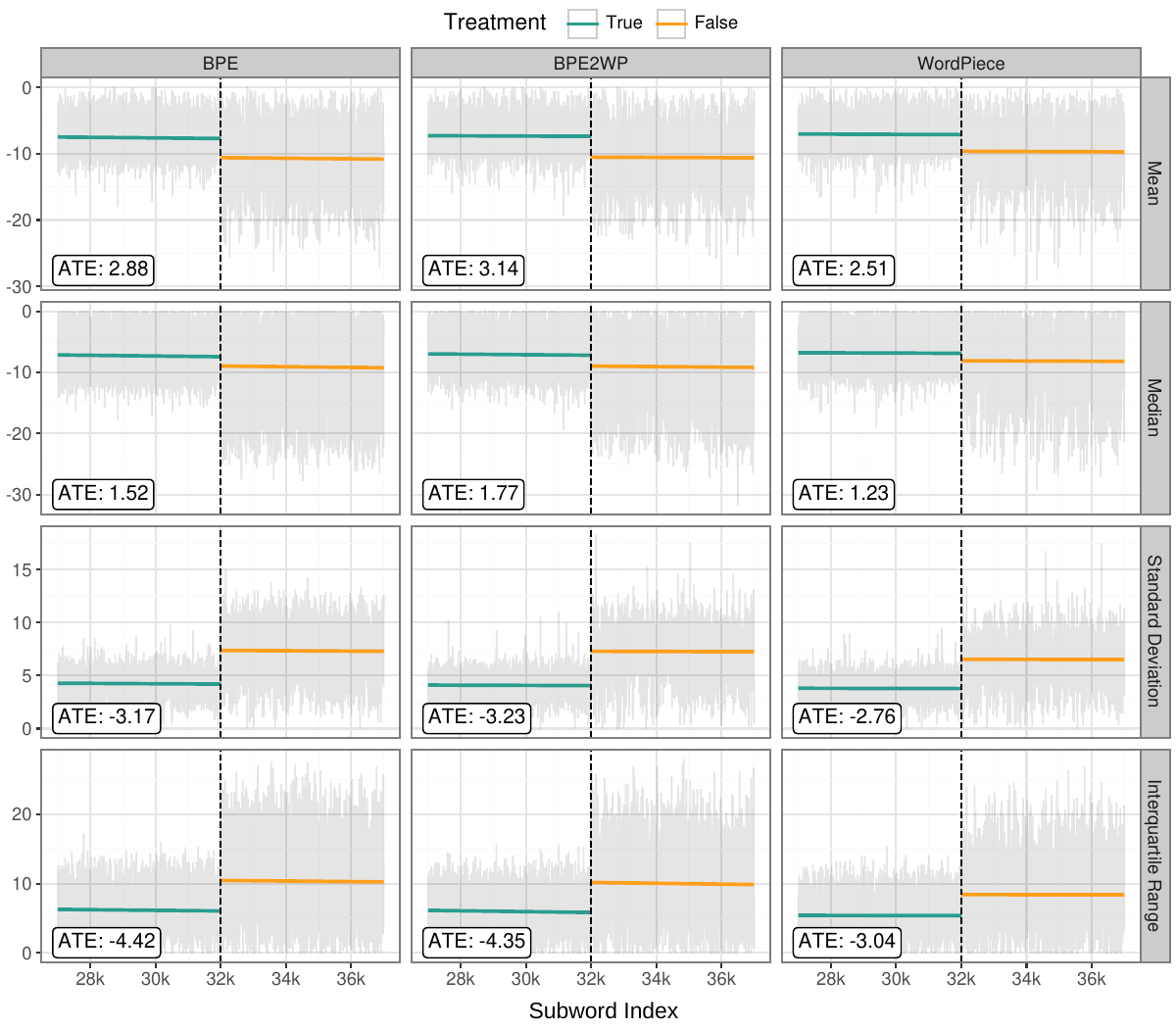}
    \vspace{-15pt}
    \caption{Average treatment effect for fully trained models with \bpe, \bpewp, and \wordpiece and $\cutoff=\q{32}{\thousand}$. Each row refers to a different outcome variable: mean, standard deviation, median, and interquartile range of a $\chssubword$’s log-probability across contexts. Subwords on the left-hand side of the cutoff are treated (i.e., added to the vocabulary).\looseness=-1}
    \label{fig:alltokenisers32000_metrics}
\end{figure*}

\begin{figure*}[!ht]
    \centering
    \includegraphics[width=\linewidth]{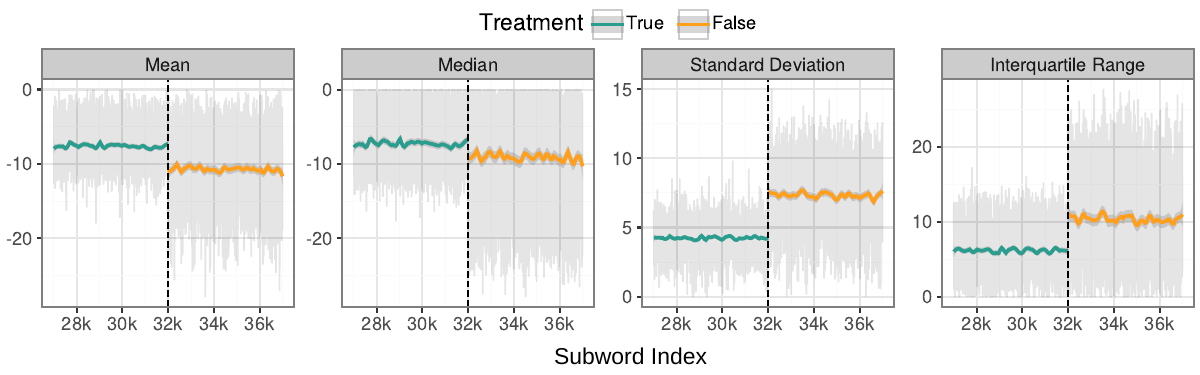}
    \caption{Stability of the average treatment effect with respect to the functional form of $\funcrel$, for fully trained models with \bpe and $\cutoff=\q{32}{\thousand}$. Columns refer to different outcome variables: mean, standard deviation, median, and interquartile range of a $\chssubword$’s log-probability across contexts. 
    Subwords on the left-hand side of the cutoff are treated (i.e., added to the vocabulary). 
    Conditional mean lines and confidence intervals are computed using the \myemph{LOESS} (locally estimated scatterplot smoothing) method.}
    \label{fig:loess_regression_model}
\end{figure*}

\end{document}